  \providecommand\BibTeX{{%
    \normalfont B\kern-0.5em{\scshape i\kern-0.25em b}\kern-0.8em\TeX}}}
\DeclareMathOperator*{\argmax}{arg\,max}
\newcommand{\mb}{\mathbb}
\newcommand{\mc}{\mathcal}
\renewcommand{\P}{\mathbb{P}}
\newcommand{\E}{\mathbb{E}}
\newcommand{\V}{\mathbb{V}}
\newtheorem{proposition}{Proposition}
\newcommand{\inappendix}{}
\newtheorem{assumption}{Assumption}
\begin{document}

\title[Best of Three Worlds: Adaptive Experimentation for Digital Marketing in Practice]{Best of Three Worlds: \\ Adaptive Experimentation for Digital Marketing in Practice}
\author{Tanner Fiez}
\affiliation{%
  \institution{Amazon}
  \city{Seattle}
  \state{WA}
  \country{USA}
}
\email{fieztann@amazon.com}

\author{Houssam Nassif}
\affiliation{%
  \institution{Meta}
  \city{Seattle}
  \state{WA}
  \country{USA}
}
\email{houssamn@meta.com}
\authornote{This work was done while at Amazon.}

\author{Yu-Cheng Chen}
\affiliation{%
  \institution{Amazon}
  \city{Seattle}
  \state{WA}
  \country{USA}
}
\email{chenyuch@amazon.com}

\author{Sergio Gamez}
\affiliation{%
  \institution{Amazon}
  \city{Madrid}
  \country{Spain}
}
\email{gameserg@amazon.es}

\author{Lalit Jain}
\affiliation{%
    \institution{Amazon \& University of Washington}
  \city{Seattle}
  \state{WA}
  \country{USA}
}
\email{lalitj@uw.edu}
\authornote{L.J. holds concurrent appointments at Amazon and the University of Washington.}
\renewcommand{\shortauthors}{Tanner Fiez, Houssam Nassif, Yu-Cheng Chen, Sergio Gamez, \& Lalit Jain}

\begin{abstract}
Adaptive experimental design (AED) methods are increasingly being used in industry as a tool to boost testing throughput or reduce experimentation cost relative to traditional A/B/N testing methods. 
However, the behavior and guarantees of such methods are not well-understood beyond idealized stationary settings.
This paper shares lessons learned regarding the challenges of naively using AED systems in industrial settings where non-stationarity is prevalent, while also providing perspectives on the proper objectives and system specifications in such settings. 
We developed an AED framework for counterfactual inference based on these experiences, and tested it in a commercial environment.
\end{abstract}

\keywords{Experimentation, multi-armed bandits, best-arm identification}

\begin{CCSXML}
<ccs2012>
   <concept>
       <concept_id>10002944.10011123.10011131</concept_id>
       <concept_desc>General and reference~Experimentation</concept_desc>
       <concept_significance>500</concept_significance>
       </concept>
   <concept>
       <concept_id>10003752.10010070.10010071.10010079</concept_id>
       <concept_desc>Theory of computation~Online learning theory</concept_desc>
       <concept_significance>500</concept_significance>
       </concept>
 </ccs2012>
\end{CCSXML}

\ccsdesc[500]{General and reference~Experimentation}
\ccsdesc[500]{Theory of computation~Online learning theory}

\maketitle

\section{Introduction}
A/B/N testing is a classic and ubiquitous form of experimentation that has a proven track record of driving key performance indicators within industry~\citep{kohavi2020trustworthy}.
Yet, experimenters are steadily shifting toward \textit{Adaptive Experimental Design} (AED) methods 
with the goal of increasing testing throughput or reducing the cost of experimentation~\citep{fiez2022}.
AED promises to use a fraction of the impressions that  traditional A/B/N tests require to yield precise and correct inference.
However, the behavior and guarantees of these approaches are not well-understood theoretically nor empirically in commercial environments where idealized stationary feedback assumptions fail to hold.
This paper aims to show real-world experiment data, bring attention to the unique challenges of using AED systems in production, and present a system designed to satisfy curated objectives with theoretical guarantees and proven production performance.


\paragraph{Contributions.}
Through simple, yet illustrative experimentation case studies, we bring to light key challenges to using AED systems effectively in industrial settings. We demonstrate that existing estimation procedures and algorithmic methods can often fail in these settings. 
We generalize our case studies to provide perspective on their proper objectives and system specifications.
Based on the lessons learned from working on AED systems in industrial settings over time, we propose a \emph{best of three worlds} approach that identifies the counterfactual optimal treatment efficiently, mitigates opportunity cost, and is robust to the form of time variation we regularly observe. 
Our approach combines a cumulative gain estimator with always-valid inference and an elimination-based algorithm. We present production system experiments that highlight how regret-minimizing algorithms such as Thompson Sampling (TS), which assume stochastic environments, can fail spectacularly both for accruing a reward metric and for making inferences. 

\begin{figure*}[t!]
    \centering
    \subfloat{\includegraphics[width=.4\textwidth]{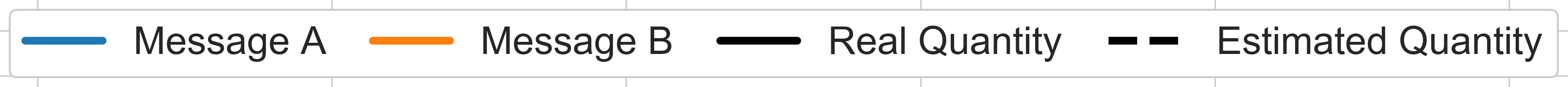}}
    \setcounter{subfigure}{0}
    \vspace{-2mm}

    \subfloat[Daily play probability]{\includegraphics[width=.25\textwidth]{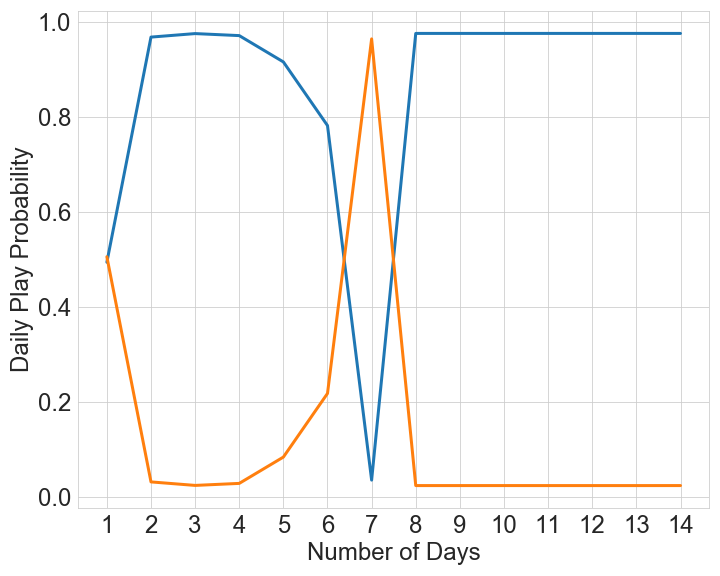}}
    \subfloat[Daily mean]{\includegraphics[width=.25\textwidth]{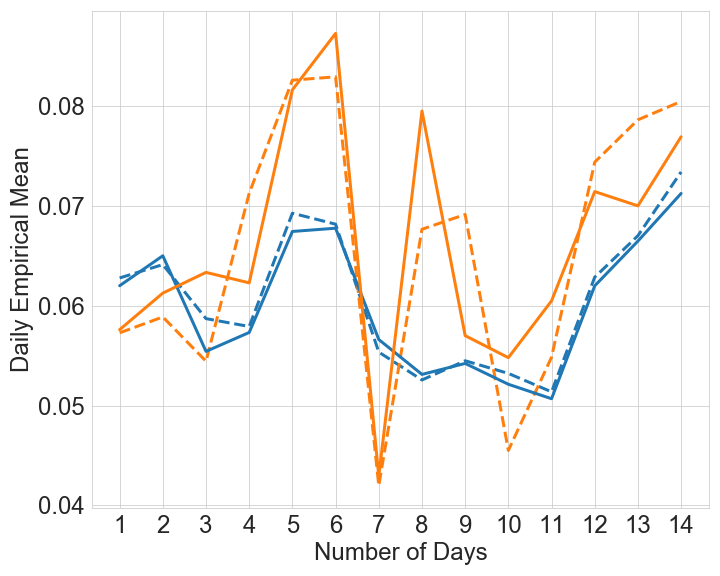}}
    \subfloat[Running empirical mean]{\includegraphics[width=.25\textwidth]{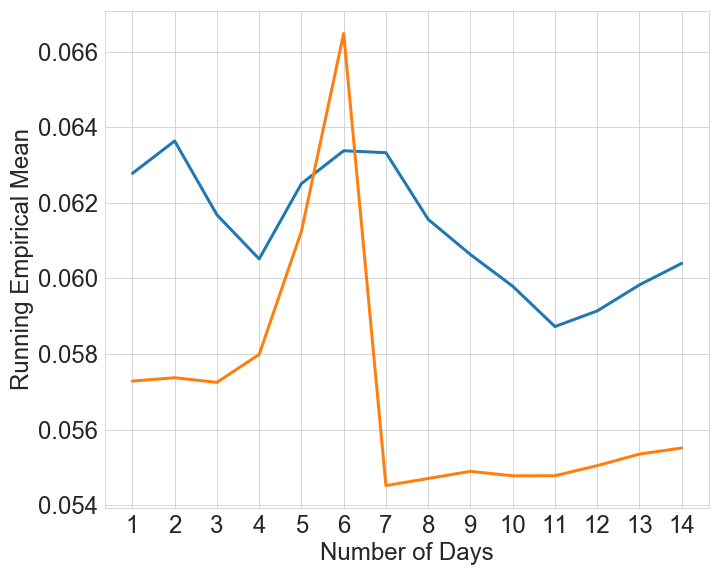}\label{fig:figc}}
    \subfloat[Cumulative gain]{\includegraphics[width=.25\textwidth]{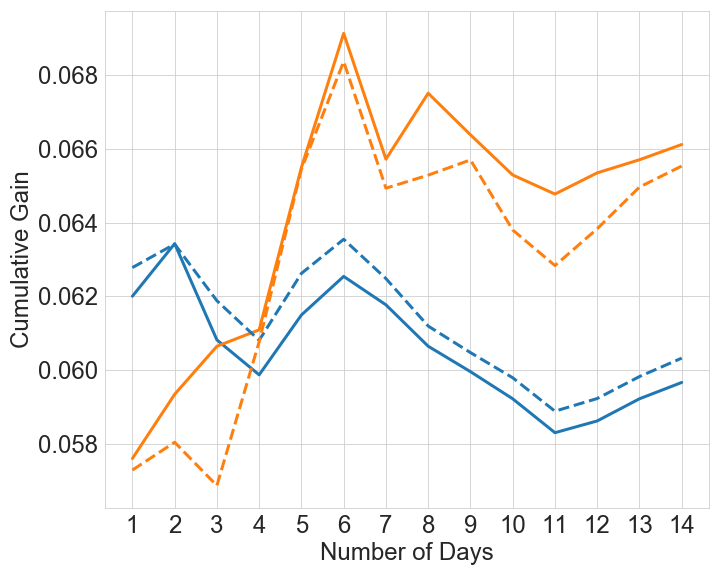}\label{fig:figd}}
    
    \caption{\small Case study of time-variation and adaptive allocations causing Simpson's paradox.}
     \label{fig:ex}
\end{figure*}
\begin{figure*}[t]
\centering
\subfloat{\includegraphics[width=.4\textwidth]{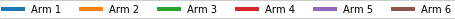}}
\setcounter{subfigure}{0}
\vspace{-2mm}

\subfloat[][Real life data example 1]{\includegraphics[width=.275\textwidth]{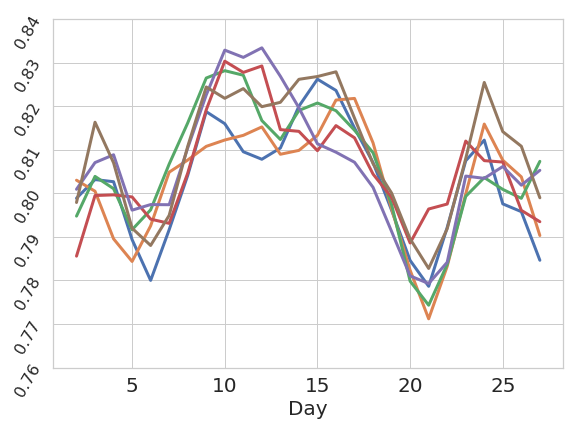}\label{fig:it_shock}}
\subfloat[][Real life data example 2]{\includegraphics[width=.275\textwidth]{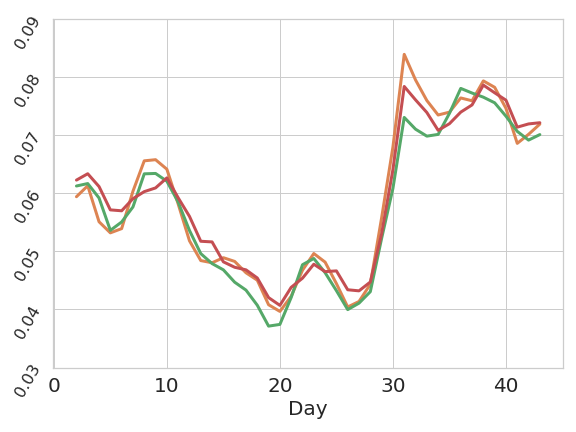}\label{fig:de_shock}}
\subfloat[][Real life data example 3]{\includegraphics[width=.275\textwidth]{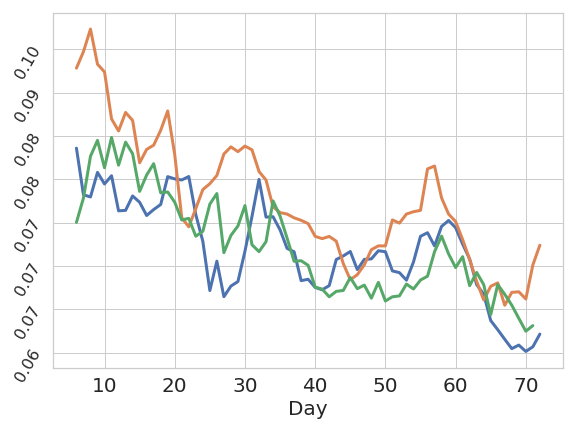}\label{fig:us_shock}}
\caption{\small Daily empirical means from marketing experiments with uniformly collected data. 
}
\label{fig:shock}
\end{figure*}

\section{Real-World Studies and Lessons}
\label{sec:case_study}
We now present experimentation case studies and then generalize.

\textbf{Case Study 1: Adaptive Designs \& Inference.} Imagine a setting where a marketer has been running a message $A$ 
and now wants to test whether message $B$ beats $A$~\cite{Kong2023NeuralInsights}. Fearful of incurring a large amount of loss from A/B testing 
cost, the marketer chooses to use an 
AED method, namely TS. At the start of the experiment, the messages are initialized with a default prior distribution and then at each round the bandit dynamically allocates traffic to each treatment
according to the posterior probability of its running empirical mean being the highest~\cite{SawantHVAE18}. After day 8, the algorithm directs most traffic to message $A$  (see Figure~\ref{fig:ex}). On day 14, the experimenter needs to decide whether $A$ has actually beaten $B$. 

They conduct a paired $t$-test which, somewhat surprisingly, does not produce a significant $p$-value. As the bandit shifted all traffic to message $A$, not enough traffic was directed to message $B$, diminishing the power of the test. The experimenter is forced to conclude that they can not reject the null hypothesis that there is no difference between the messages. A few days later, the experimenter, who is still perplexed, looks at the daily empirical means and is then shocked to see that on most days, $B$ tends to have a higher daily empirical mean than $A$, which disagrees with the bandit's beliefs that led to the traffic allocation it produced.

To understand this behavior, note that in Figure~\ref{fig:figc} the running empirical mean of $A$ exceeds that of $B$, leading to all traffic on $A$. This phenomenon where the running empirical mean shows a different direction than daily comparisons is known as \textit{Simpson's Paradox}, and occurs in settings where the traffic is dynamically allocated to arms whose means change over time~\citep{kohavi2020trustworthy}. Intuitively, the experimenter has made a Type I error by trusting the algorithm and choosing arm $A$. Indeed, during the time period from days 8 to 14, the algorithm decided to put more traffic on arm $A$. 
Convinced by its own bad decision, the algorithm then chooses a bad
allocation which 
exacerbates the problem and leads to a vicious cycle~\cite{fiez2022}. 

\textbf{Case Study 2: Real Life Time Variation.}
The approach we develop in this paper is motivated by the type of data that is regularly observed in industrial settings. Typically, industrial data is not stationary, but it is also not fully adversarial. We show that the underlying data processes lie somewhere in between these extremes. Figure~\ref{fig:shock} shows the daily empirical means for each treatment within several marketing experiments where the data was collected uniformly at random between treatments. 
Clearly, significant non-stationarity in the underlying performance of all treatments in an experiment is the norm and not the exception. 

The data shows both trends and cyclicity, yet the structure is inconsistent within any given experiment as well as across experiments. Consequently, it is challenging to adopt solutions that model latent confounders a priori~\cite{qin2022adaptivity}.
On the other hand, the variation in the performance gaps between treatments is more well-behaved.
This reveals that the objective of identifying the counterfactual optimal treatment in an experiment is often well-defined.

\subsection{Lessons Learned} 
We now dive deeper into the general challenges of using AED methods in practice raised by the case studies and provide thoughts on industrial objectives and specifications that guide our approach.

\emph{Regret Minimization Isn't Enough.} The fundamental goal of experimentation is to test hypothesis and deliver results that allow for \textit{future iterations}~\cite{kohavi2020trustworthy}.  As a result, it is important that experimentation procedures give the experimenter the ability to arrive at valid and measurable inferences. In settings where the experimenter wants to learn the best treatment, optimal regret minimization procedures 
take a significantly longer time to return the identity of the best arm with high probability~\citep{audibert2010best, degenne2019bridging, fiez2022} and lead to biased mean estimates~\citep{shin2019bias}. 

\emph{Be Wary of the Batch.} Most experimental systems use batched model updates (daily or weekly). In the example from Section~\ref{sec:case_study}, the traffic was not constant daily (not shown), so an update on one day can have an undue impact on the rest of the experiment time. In experiments over short horizons, this implies that observations on the first few days can have a disproportionate impact on the traffic allocation and also the subsequent inferences that are made.

\emph{Stochastic Bandit Algorithms Often Fail.} While it is common in industrial systems to deploy regret-minimizing algorithms based on underlying stationarity assumptions, these algorithms fail with regularity in any given experiment even for the sole purpose of accruing an optimization metric. Often these failures go unnoticed due to the absence of a suitable comparison to bring attention to the problem (an appropriate A/A test). We highlight in our experiments that stochastic bandit algorithms can fail to maximize the accumulation of an optimization metric as a result of dynamic traffic allocations in combination with an estimation based on the observed adaptively collected data in time-varying environments.

\emph{Identify the Counterfactual Best.} 
In settings where arm means are shifting over time, it is challenging to define the notion of a ``best-arm'' as the mean performance of an arm and the identity of the best arm may change daily~\cite{Weltz2023heteroskedastic}. 
To bridge this gap, our proposed objective is to \textit{identify with high probability the treatment that would have obtained the highest possible reward, if all traffic had been diverted to it}. This counterfactual metric is known as the \textit{cumulative gain}. 
Figure~\ref{fig:figd} demonstrates the cumulative gain over time for the case study. With the exception of~\citep{abbasi2018best}, we believe that this objective has hardly been considered in the best-arm identification literature.

\emph{Always Valid Inference.}  In traditional A/B/N testing, the experiment horizon is fixed ahead of time
with a significance test at the end of the experiment. Monitoring
$p$-values computed during the experiment is heavily frowned upon as it leads to Type 1 error inflation~\citep{johari2017peeking}. Recent work in the experimental space has lead to generalizations of the $p$-value known as \emph{always-valid p-values} that can safely be sequentially monitored~\cite{robbins1970statistical, jamieson2014lil, johari2017peeking, howard2021time}. This capability is critical in practice to allow for early stopping with valid inferences.

\emph{\textbf{The Best of Three Worlds (BOTW).}} Though optimal regret minimization procedures fail to provide valid inferences and tend to identify the best arm more slowly, we still would like to minimize the cost of experimentation. Thus experimentation systems should try to provide the best of three worlds: identification of the counterfactual best, mitigation of opportunity cost, and robustness to arbitrary time variation. In completely adversarial settings we can't hope to have all three~\citep{abbasi2018best}, but real life settings mostly live somewhere between fully stochastic and fully adversarial. 

\section{Our Approach}
\label{sec:estimation}
We now describe our approach for experimentation that provides the best of three worlds. The method combines always-valid inference on estimators which are robust to time variation with an elimination-based algorithmic approach. 

Let us focus on the example of displaying ads on an online service. To set notation for the \emph{time-varying} setting, we assume an experiment consisting of $k$ arms running for a period of $T$ days beginning on day $t=1$. On any day $t\in [T]$, arm $i\in [k]$ receives $n_{i, t}$ impressions and $n_t= \sum_{i\in [k]}n_{i, t}$ is the total amount of traffic on that day.\footnote{We adopt the standard set notation of $[n]=\{1, 2, \dots, n\}$ for any $n\in \mb{Z}_{+}$.} 
The sample count $n_{i,t}$ for each arm $i\in [k]$ on any day $t\in [T]$ is a stochastic quantity based on the rewards and samples given to the arms up to day $t\in [T]$. We assume that the underlying behavior of an arm $i\in [k]$ on day $t\in [T]$ is fixed over the period of a day and described by a Bernoulli distribution with mean $\mu_{i, t}\in (0,1)$. Finally, we let $r_{i, t}$ and $\widehat{\mu}_{i, t} :=r_{i, t}/n_{i, t}$ denote the total reward and daily empirical mean on day $t\in [T]$ for any arm $i\in [k]$, respectively. Thus, conditional on the allocation $n_{i,t}$, $r_{i,t}\sim \text{Binomial}(n_{i,t}, \mu_{i,t})$ for each arm $i\in [k]$ on any day $t\in [T]$. 

\subsection{Estimation with Time Variation}
\label{sec:est_time_var}
We now discuss estimation in the presence of time-variation.
\subsubsection{Running Empirical Means.} We begin by discussing the standard approach in the experimentation literature of using the running empirical mean estimator to assess performance given a set of data that has been collected.
The running empirical mean of arm $i\in [k]$ after $T$ days of an experiment with the total traffic to the arm denoted by $\widebar{n}_{i, T}:=\sum_{t=1}^Tn_{i, t}$ is: 
\begin{equation}\label{eq:empiricalRate}
 \widebar{\mu}_{i} := \textstyle (\sum\nolimits_{t=1}^T r_{i,t})/\widebar{n}_{i, T}= (\sum\nolimits_{t=1}^T n_{i,t}\widehat{\mu}_{i,t})/\widebar{n}_{i, T}.
\end{equation}
If the performance of each arm $i\in [k]$ is fixed ($\mu_{i,t} = \mu_{i} \ \forall \ t\in [T]$) and each arm receives a constant, pre-determined proportion of the traffic each day, then running empirical mean is an unbiased estimator of the underlying mean ($\E[\widebar{\mu}_{i, T}] = \mu_i$). 
 
However, when the underlying mean of an arm exhibits daily time-variation, 
the running empirical mean can be a problematic estimator.
To begin, the estimate $\widebar{\mu}_{i, T}$ is subject to \emph{Simpson's Paradox}~\cite{kohavi2020trustworthy}. In the context of experimentation, Simpson's paradox refers to a circumstance in which the daily empirical mean of an arm $i\in [k]$ is higher than that of an arm $j\in [k]$ on each given day ($\widehat{\mu}_{i, t}>\widehat{\mu}_{j, t} \ \forall \ t\in [T]$), \emph{but} the running empirical mean of arm $j$ is higher than that of arm $i$ over the course of an experiment ($\widebar{\mu}_{j, T} > \widebar{\mu}_{i, T}$). As we saw in Case Study 1 (Figure~\ref{fig:figc}), in experimentation where the traffic allocation is changing over time, this paradox regularly arises.    
Moreover, for argument, if we suppose that the allocation of impressions is predetermined but not necessarily constant, then for any $i\in [k]$, $\mb{E}[\widebar{\mu}_{i, T}]=\textstyle \sum\nolimits_{t=1}^T (n_{i,t}/\widebar{n}_{i, T}) \mu_{i,t}$.
Thus, in expectation, the running empirical mean estimator of Equation~\eqref{eq:empiricalRate} is estimating a rather arbitrarily weighted sum of the daily means that \textit{depends on the allocation}.   Finally, even if the underlying means are stationary, the running empirical mean is a biased estimator when the traffic is being collected adaptively~\cite{zhang2020inference, hadad2021confidence, deshpande2018accurate, shin2019bias}. These estimator problems are empirically demonstrated in Section~\ref{sec:experiments}.

\subsubsection{Cumulative Gain}
As the above discussion implies, the running empirical mean estimator has many negative characteristics that make it inappropriate for time-varying settings with adaptive traffic allocation. Part of the challenge is that in time-varying settings the notion of ``the best performing arm'' may be poorly defined since the best-arm may change from day-to-day. To overcome this, we instead try to answer the following counterfactual: \textbf{``\emph{how much reward would this arm have accrued if it had received all of the traffic}''}. 
More precisely, for any arm $i\in [k]$, the cumulative gain (CG) after $T$ days is defined as 
\begin{equation} 
\textstyle G_{i, T} := \sum\nolimits_{t=1}^T n_t \mu_{i,t}.
\label{eq:cumu_gain}
\end{equation}

Let total experiment traffic count after $T$ days be $\widebar{n}_T:=\sum_{t=1}^Tn_t$.
The corresponding cumulative gain rate variant is: 
\begin{equation}
\widebar{G}_{i,T} := G_{i, T}/\widebar{n}_T.
\label{eq:n_cumu_gain}
\end{equation}
In stationary settings, the cumulative gain rate reduces to the underlying mean, that is, $\widebar{G}_{i,T} = \mu_i$ for arm $i\in [k]$ when $\mu_{i,t} = \mu_{i}$ for all days $t\in [T]$. 
In general, the cumulative gain rate reduces to a weighted average of the daily means. 
In particular, for any arm $i\in [k]$, the cumulative gain rate after $T$ days can be written as  $\widebar{G}_{i,T} = \sum_{t=1}^Tw_t\mu_{i, t}$
where the weight $w_t:=n_t/\widebar{n}_T$ is the proportion of the total experiment traffic that came on day $t$. 

\paragraph{Cumulative Gain Estimator.}
We can build an estimator for the cumulative gain metric with desirable properties using inverse probability weighting~\cite{horwitz52generalization}. Assume that on each day $t\in [T]$ of the experiment, a probability vector $p_t = (p_{1,t}, \cdots, p_{k,t})\in \Delta_k$ is chosen according to the history up to day $t$.\footnote{The notation $\Delta_k$ is used to denote the $k-1$ dimensional simplex.} Then, each visitor $s_t\in [n_t]$ on day $t\in [T]$ is shown an arm $I_{s_t}\in [k]$ that is selected with probability $\P(I_{s_t} = i)= p_{i,t}$ and a corresponding reward $r_{s_t}$ is observed. 
A natural cumulative gain estimator is given by inverse propensity weighing:
\begin{equation}
  \textstyle  \widehat{G}_{i,T} = \sum\nolimits_{t=1}^T (r_{i,t}/p_{i,t}).
\label{eq:cumu_gain_estimator}
\end{equation}
Proposition~\ref{prop:unbiased} shows the estimator is unbiased (proof in App.~\ref{sec:unbiased_proof}{\inappendix}).

\begin{proposition}
For any arm $i\in [k]$ and day horizon $T$, the estimator $\widehat{G}_{i,T}=\sum\nolimits_{t=1}^T (r_{i,t}/p_{i,t})$ is unbiased for the cumulative gain. That is, we have $\E[\widehat{G}_{i,T}]=G_{i, T}$ as defined in Equation~\eqref{eq:cumu_gain}.
\label{prop:unbiased}
\end{proposition}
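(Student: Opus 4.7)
The plan is to establish unbiasedness term by term via linearity of expectation, reducing the claim to showing $\E[r_{i,t}/p_{i,t}] = n_t \mu_{i,t}$ for each day $t \in [T]$, and then summing to recover $G_{i,T}$.

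First, I would set up the relevant filtration. Let $\mathcal{H}_t$ denote the history of allocations, impressions, and rewards up to and including the start of day $t$; this is the information on which $p_t$ is chosen. Note that, conditional on $\mathcal{H}_t$, the probability vector $p_t$, the total traffic count $n_t$, and the underlying daily mean $\mu_{i,t}$ are all determined, while the per-visitor arm assignments $\{I_{s_t}\}_{s_t=1}^{n_t}$ and rewards $\{r_{s_t}\}_{s_t=1}^{n_t}$ are still random. The total reward attributed to arm $i$ on day $t$ can be written as $r_{i,t} = \sum_{s_t=1}^{n_t} \mathbbm{1}\{I_{s_t}=i\} Y_{s_t}$, where $Y_{s_t} \sim \text{Bernoulli}(\mu_{i,t})$ is the reward that would be realized if arm $i$ were shown, independent of $I_{s_t}$ given $\mathcal{H}_t$.

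Next I would compute the conditional expectation. Since $\P(I_{s_t} = i \mid \mathcal{H}_t) = p_{i,t}$ and the $Y_{s_t}$ are conditionally independent Bernoullis with mean $\mu_{i,t}$, I get $\E[r_{i,t} \mid \mathcal{H}_t] = n_t \, p_{i,t} \, \mu_{i,t}$. Dividing by $p_{i,t}$ (which is $\mathcal{H}_t$-measurable) yields
\begin{equation*}
\E\!\left[\frac{r_{i,t}}{p_{i,t}} \,\Big|\, \mathcal{H}_t\right] = n_t \, \mu_{i,t}.
\end{equation*}
Taking the unconditional expectation via the tower property, and then summing over $t = 1, \dots, T$ by linearity of expectation, gives $\E[\widehat{G}_{i,T}] = \sum_{t=1}^T \E[n_t \mu_{i,t}]$. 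Under the modeling assumption that $\mu_{i,t}$ is a fixed (non-random) environmental parameter and that the quantity of interest $G_{i,T} = \sum_t n_t \mu_{i,t}$ is defined with $n_t$ inside the expectation (as written in Equation~\eqref{eq:cumu_gain}), this matches $G_{i,T}$ exactly.

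The only subtlety — and the main thing to be careful about — is the adaptive nature of $p_{i,t}$ and $n_t$: both are random and depend on the past, so one cannot simply treat $p_{i,t}$ as a constant. The conditioning on $\mathcal{H}_t$ is exactly what sidesteps this, and requires the standard positivity assumption $p_{i,t} > 0$ almost surely so that the inverse propensity weight is well-defined. Otherwise, the argument is a direct application of inverse propensity scoring in the spirit of Horvitz–Thompson.
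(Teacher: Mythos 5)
Your proof is correct and follows essentially the same route as the paper's: write $r_{i,t}$ as a sum of per-visitor indicator-weighted rewards, condition on the history so that $p_{i,t}$ is measurable, compute the conditional expectation $n_t p_{i,t}\mu_{i,t}$, cancel the propensity, and apply the tower property and linearity. Your added remarks on the positivity of $p_{i,t}$ and the status of $n_t$ are sensible refinements but do not change the argument.
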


The cumulative gain estimator will never suffer from Simpson's paradox, unlike the running empirical mean estimator which is prone to this phenomenon. Indeed, if $\widehat{\mu}_{i,t} > \widehat{\mu}_{j,t}$ for all $t\in [T]$ for some pair of arms $i, j\in [k]$, then we necessarily have $\widehat{G}_{i,T}\geq \widehat{G}_{j,T}$. As shown in the case study, Figures~\ref{fig:figc} and~\ref{fig:figd}, using the cumulative gain would have prevented misleading inferences from Simpson's paradox. The cumulative gain metric can be used for the purpose of assessing and estimating the performance gap between arms by taking the difference between the quantities for a pair of arms. In particular, we have that $\mb{E}[\widehat{G}_{i, T}-\widehat{G}_{j, T}]=G_{i, T}-G_{j, T}$ for any pair of arms $i, j\in [k]$.\footnote{We adopt the notation $G_{i, j, T}:=G_{i, T}-G_{j, T}$ and $\widehat{G}_{i, j, T}:=\widehat{G}_{i, T}-\widehat{G}_{j, T}$.
} The cumulative gain is a familiar quantity from the non-stochastic bandit research field~\cite{abbasi2018best}. namely the quantity that is being measured when computing regret in non-stochastic bandit problems and the basis of robust regret minimization algorithms.

\subsubsection{Always-Valid Inference.} 
\label{sec:always_valid_section}
Now that we have defined a performance metric and analyzed the properties of a corresponding estimator, we shift our focus to describing how the tools developed so far can enable robust inferences in experimentation. While fixed horizon statistics and corresponding hypothesis tests are commonly used in production systems for experimentation, they are subject to a high risk of abuse and error rate inflation through repeated evaluation of the outcomes by practitioners and business stakeholders~\cite{johari2017peeking}. This motivates adopting \emph{always-valid confidence intervals}~\cite{johari2017peeking, howard2021time, jamieson2018bandit}, which allow experiments to be sequentially monitored without inflation of the error rate. 

We directly focus on the cumulative gain gap between a pair of arms since we are interested in using always-valid intervals for making inferences on the comparison of arms. In this context, an always-valid confidence interval $C(i, j, t, \delta)$ for a pair of arms $i, j \in [k]$ with error tolerance $\delta\in (0, 1)$ guarantees
\begin{equation*}
\P(\exists \ t\geq 1, i,j\in [k]: |\widehat{G}_{i,j, t}- G_{i, j,t}| \geq C(i,j,t,\delta)) \leq \delta. \end{equation*}
There are several possible ways to derive an always-valid confidence interval for this quantity.

To obtain the always-valid confidence interval, we now apply the MSPRT\footnote{For reference, see Eq.~14 in \cite{howard2021time}.} using the plugin estimators $\widehat{\mu}_{i,t}$ and $\widehat{\mu}_{j,t}$ for the unknown daily arm means $\mu_{i,t}$ and $\mu_{j,t}$ on each day in an estimate of the variance. As a result, under the stated assumptions, we have
\begin{equation*}
\textstyle \P(\exists \ t\geq 1, i,j\in [k]: |\widehat{G}_{i,j, t}- G_{i, j,t}| \geq C(i,j,t,\delta)) \leq \delta, \end{equation*}
 \begin{equation} 
\text{with} \quad \textstyle C(i,j,t,\delta) := \sqrt{(\widehat{V}_{i, j, t}+\rho)\log\big((\widehat{V}_{i, j, t}+\rho)/(\rho\delta^2)\big)}
 \label{eq:conf_def}
 \end{equation}
 where $\rho >0$ is a fixed constant and 
\begin{equation*} 
  \textstyle  \widehat{V}_{i, j, t}=\sum\nolimits_{\tau=1}^t n_\tau\big(\widehat{\mu}_{i,\tau}(1-\widehat{\mu}_{i,\tau})/p_{i,\tau}+\widehat{\mu}_{j,\tau}(1-\widehat{\mu}_{j,\tau})/p_{j,\tau}\big).
\label{eq:variance_def}
\end{equation*} 
This characterization immediately allows for sequential monitoring of cumulative gain gap estimates through upper and lower bounds for high probability decision-making. 
For more details on the confidence interval justification see Appendix~\ref{app_sec:confidence_analysis}{\inappendix}.

Finally, we make two remarks about the variance of the cumulative gain estimator. First, in general since our cumulative gain estimator is based on inverse propensity weighting, if any of the arm allocation probabilities are very small, the estimator variance can become very large. 
In Appendix~\ref{app_sec:var_analysis}{\inappendix} we discuss the bias/variance trade-off between the running empirical mean estimator and the cumulative gain estimator. 
Second,~\cite{pekelis2018} propose a similar estimator to the cumulative gain estimator. However, they make the strong assumption that $\mu_{i,t} = \mu_i + \gamma_t$ for all $i\in [k]$ where $\gamma_t$ is an exogenous shock. 
For this restricted setting, we demonstrate in Appendix~\ref{sec:reduced_variance}{\inappendix} the variance reduced cumulative gain (VRCG) estimator 
\begin{align*}
    \widehat{G}_{i, j, T}^{\ast} = \textstyle \sum\nolimits_{t=1}^T (\widehat{\mu}_{i,t} - \widehat{\mu}_{j,t}) w_t, \quad\text{with $w_t$ defined as}\\
  w_t = \textstyle \big(\sum\nolimits_{t=1}^T (n_{i, t}^{-1} + n_{j, t}^{-1})^{-1}\big)^{-1} (n_{i, t}^{-1} + n_{j, t}^{-1})^{-1} \quad \forall \ t\in [T],
\end{align*}
is the minimal variance weighted estimator for the difference of means $\mu_i - \mu_j$. VRCG thus outperforms the cumulative gain estimator or the estimator proposed in~\cite{pekelis2018} in this restricted setting. 

\subsection{Adaptive Counterfactual Inference}
\label{sec:elimination}

We seek an AED method achieving the best of three worlds. That is an algorithm giving confident, sample efficient identification of the counterfactual optimal treatment, while simultaneously minimizing regret in experiments where stationarity is not guaranteed. 
\begin{algorithm}[t]
\begin{algorithmic}[1]
\State{\textbf{Input} Arm set $[k]$, error tolerance $\delta\in (0, 1)$}
\State{\textbf{Initialize}  Active arm set $\mc{A} \gets [k]$, day $t\gets 1$}
\While{$|\mc{A}| > 1$} 
\State{Set $p_{i, t} = 1/|\mc{A}|$ for all $i\in \mc{A}$ and $p_{i, t} =0$ for all $i\in [k]\setminus\mc{A}$}
\State{For each arrival $s_t\in [n_t]$ show arm $I_{s_t}\sim p_{t}$}
\State{Collect observations $\{r_{s_t}, I_{s_{t}}, p_{t}\}_{s=1}^{n_t}$} 

\State{\textbf{Eliminate suboptimal arms: }}
\State{$\mc{A}\gets \mc{A}\setminus\{j\in \mc{A} \text{ s.t. }\exists \ i\in \mc{A}: \widehat{G}_{i,j,t} - C(i,j,t,\delta/k)>0\}$}
\State{$t\gets t+1$}
\EndWhile
\State{Return $\mc{A}$}
\end{algorithmic}
\caption{\small Cumulative Gain Successive Elimination (CGSE)}
\label{alg:ae}
\end{algorithm}

\paragraph{Algorithm Description} For adaptive counterfactual inference, we adopt the procedure of Algorithm~\ref{alg:ae} (CGSE), which is an elimination based method on the cumulative gain. 
As input, CGSE takes a set of arms $[k]$, and a confidence parameter $\delta$ (normally set to 0.1). Then, on each day, an active set of arms $\mc{A}$ is maintained and each is shown to users with equal probability $1/|\mc{A}|$. Finally, at the conclusion of each day, any arms that can be concluded to not have the maximum cumulative gain up through the current day among the active set are removed and never sampled again.

\begin{figure*}[t]
\begin{minipage}{\textwidth}
\begin{minipage}{.3\textwidth}
\centering
\subfloat{\includegraphics[width=1\textwidth]{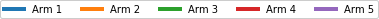}}
\end{minipage}\hfill
\begin{minipage}{.7\textwidth}
\centering
\subfloat{\includegraphics[width=.5\textwidth]{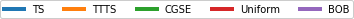}}
\end{minipage}
\end{minipage}
\setcounter{subfigure}{0}
\vspace{-5mm}

\centering
\begin{minipage}{\textwidth}
\begin{minipage}{.25\textwidth}
    \centering
    \subfloat[]{\includegraphics[width=.95\textwidth]{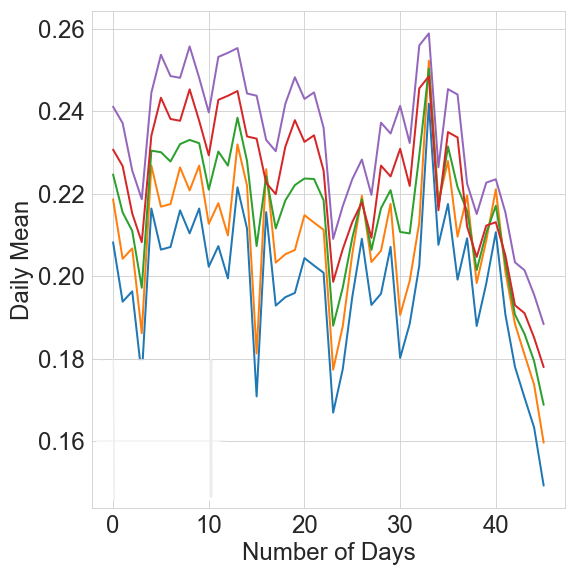}\label{fig:expa}}
\end{minipage}
\begin{minipage}{.75\textwidth}
    \subfloat[]{\includegraphics[width=.33\textwidth]{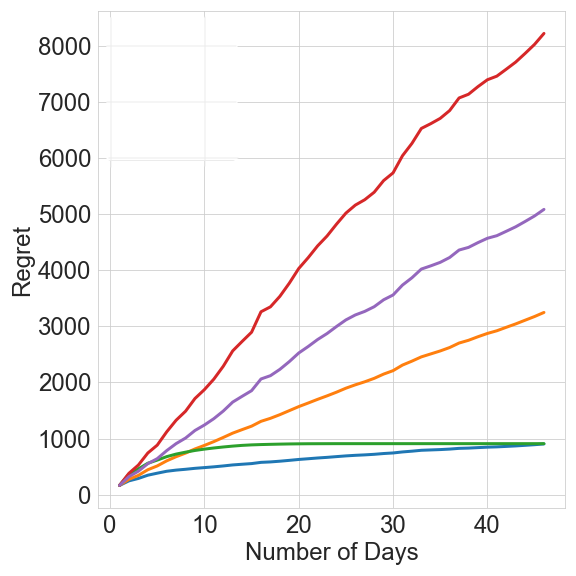}\label{fig:expb}}
    \subfloat[]{\includegraphics[width=.33\textwidth]{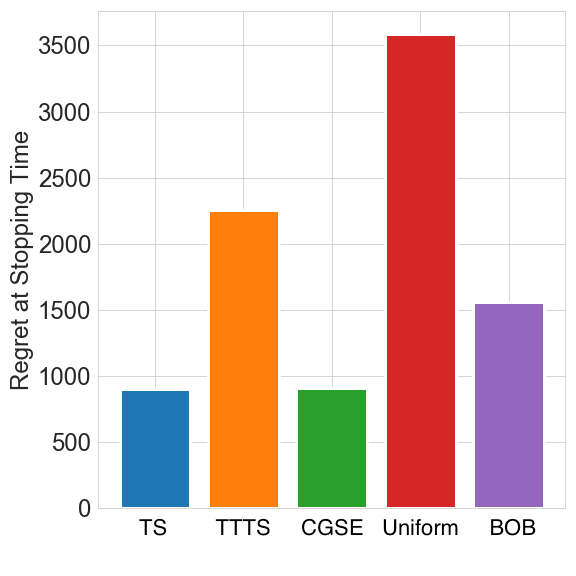}\label{fig:expc}}
    \subfloat[]{\includegraphics[width=.33\textwidth]{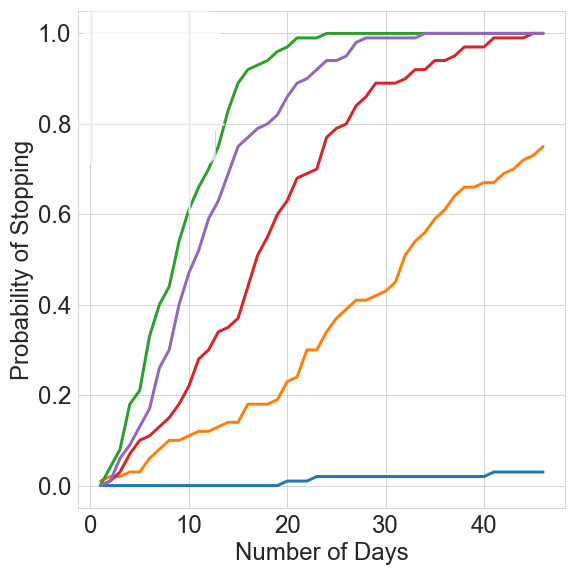}\label{fig:expd}}
\end{minipage}
\end{minipage}
\caption{\small Offline experiment 1. The daily arm means (a), regret as a function of the day (b), regret at the stopping time (c), and the probability of identifying the optimal arm with statistical significance by a given day (d).}
\label{fig:timevaryregret}
\end{figure*}

Formally, motivated by the existence of the always-valid confidence interval, CGSE eliminates an arm $j\in [k]$ on a day $t\geq 1$ when there exists an arm $i\in [k]$ such that $\widehat{G}_{i,t}-\widehat{G}_{i,j}-C(i,j,t,\delta/k) > 0$. This implies that the cumulative gain gap $G_{i,t}-G_{j,t}>0$ is positive and thus arm $j\in [k]$ is suboptimal with high probability. This procedure controls the variance of the cumulative gain estimator by keeping the sampling probabilities uniform across the set of active arms, which in turn results in sample efficient identification. Moreover, the elimination mechanism controls the regret by ceasing to give any traffic to arms that are provably sub-optimal.

An immediate criticism of this method is the concern that under non-stationarity, we may eliminate an arm early in the experiment that may perform better later on. However, we developed this strategy based on real-life data where even though the daily means of arms may move significantly, the differences between arms is relatively constant as described in Case Study 2.
In Section~\ref{sec:algorithm_guarantees}, we discuss our method's strong theoretical guarantees.

\section{Experiments and Guarantees}
\label{sec:experiments}

We now present an illustrative set of both offline and online production experiments. In our offline experiments, we highlight the benefits of our algorithm for identification and regret on a realistic example versus alternatives. The online experiments show that our algorithm has been deployed in production and delivered promising outcomes in comparison to standard regret minimization. 

\subsection{Offline Experiments}
\label{sec:offline_exps}
We consider a variation of a logged past online experiment and compare our method against several algorithmic baselines. Figure~\ref{fig:expa} plots the arm means on each day of the experiment. As arm 5 has the highest mean among the arms on each day, it is as the counterfactual optimal arm at any day. We run each of the candidate algorithms 100 times on this data using a daily batch size of $10000$, and plot the mean regret of the algorithms over the runs (Figure~\ref{fig:expb}), the mean regret of the algorithms on the day the optimal arm is identified with statistical significance\footnote{If algorithm did not meet termination criteria, we use the algorithm's regret at the end of the final day.} (Figure~\ref{fig:expc}), and the probability over the runs of identifying the optimal arm with statistical significance by each given day (Figure~\ref{fig:expd}). Specifically, we monitor each algorithm using the always-valid confidence intervals from Section~\ref{sec:always_valid_section}. 
\paragraph{Comparison Algorithms.} 
We consider CGSE and several comparisons. Thompson Sampling (TS) is an algorithm that maintains a posterior distribution on the running empirical mean of each arm that can be translated to a posterior probability $p_{i, t}=\alpha_{i, t}$ to play each arm $i\in [k]$ at day $t\geq 1$. 
Top Two TS (TTTS)~\cite{russo2016simple} is a simple variant of TS intended for best arm identification that plays arm $i\in [k]$ at day $t\geq 1$ with probability $p_{i, t} = \alpha_{i, t}(\beta + (1-\beta)\sum_{j\neq i}\frac{\alpha_{j, t}}{1-\alpha_{j, t}})$ with $\beta=1/2$.
The uniform sampling algorithm plays each arm $i\in [k]$ at day $t\geq 1$ with probability $p_{i, t} = 1/k$. Finally,
best of both worlds (BOB)~\cite{abbasi2018best} sorts and ranks arms by decreasing order of the cumulative gain estimates and then computes a probability distribution based on this ranking. The probability of arm $i\in [k]$ being played at day $t\geq 1$ is given by  $p_{i, t} = 1/(\widetilde{\langle i \rangle_t} \overline{\log} k)$ where $\widetilde{\langle i \rangle_t}$ denotes the rank and $\overline{\log} k=\sum_{k'=1}^k (1/k')$.

\begin{figure*}[t]
\centering

\begin{minipage}{\textwidth}
\begin{minipage}{.75\textwidth}
\centering
\subfloat{\includegraphics[width=.35\textwidth]{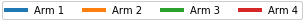}}
\end{minipage}\hfill
\begin{minipage}{.25\textwidth}
\centering
\subfloat{\includegraphics[width=.5\textwidth]{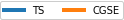}}
\end{minipage}
\end{minipage}
\setcounter{subfigure}{0}
\vspace{-5mm}

\subfloat[][Running empirical means for TS (solid lines) \& CGSE (dashed lines)]{\includegraphics[width=.25\textwidth]{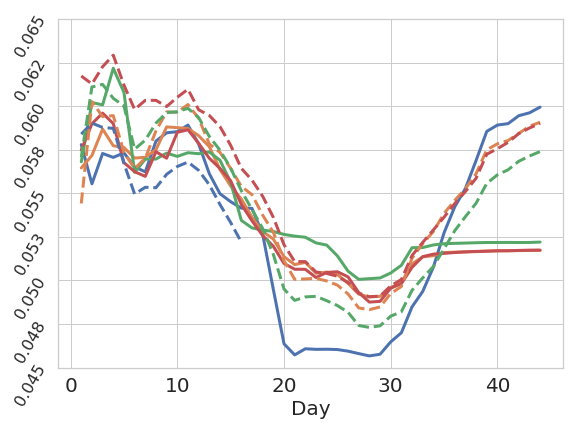}\label{fig:de_emp_mean}}
\subfloat[][CGSE cumulative gain rates]{\includegraphics[width=.25\textwidth]{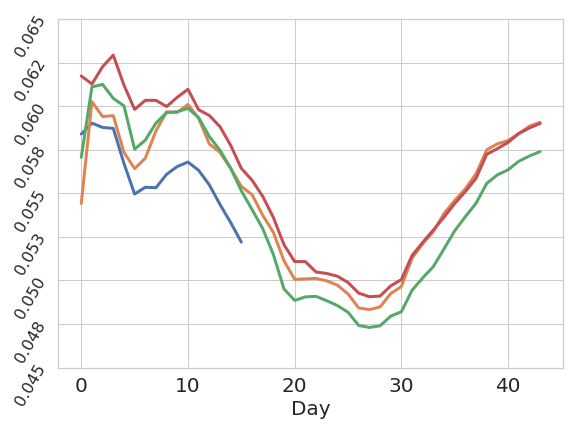}\label{fig:de_imp_se}}
\subfloat[][TS daily probability allocation]{\includegraphics[width=.25\textwidth]{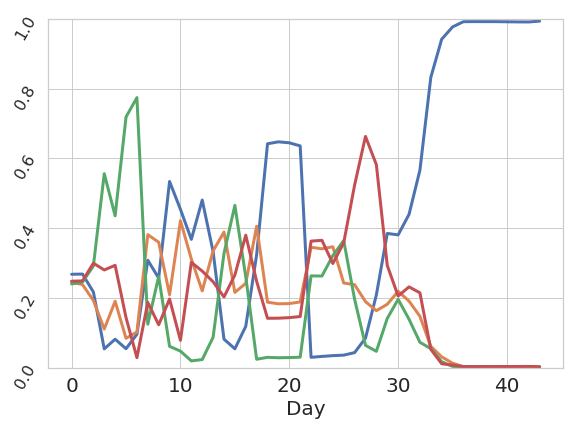}\label{fig:de_imp_ts}}
\subfloat[][Algorithm daily empirical means]{\includegraphics[width=.25\textwidth]{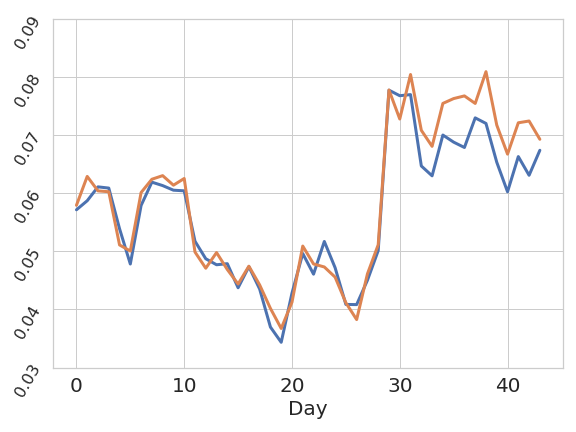}\label{fig:de_compare}}

\caption{\small Live experiment 1: TS catastrophically fails on production data and shifts all traffic to the worst arm. 
}
\label{fig:de}
\end{figure*}

\begin{figure*}[t!]
\centering

\begin{minipage}{\textwidth}
\begin{minipage}{.75\textwidth}
\centering
\subfloat{\includegraphics[width=.5\textwidth]{Figs/legend_arms_6}}
\end{minipage}\hfill
\begin{minipage}{.25\textwidth}
\centering
\subfloat{\includegraphics[width=.5\textwidth]{Figs/exp_3/policy_legend}}
\end{minipage}
\end{minipage}
\setcounter{subfigure}{0}
\vspace{-5mm}

\subfloat[][TS running empirical means]{\includegraphics[width=.25\textwidth]{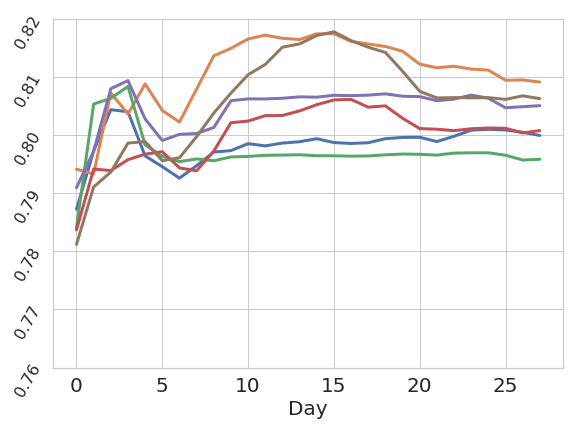}\label{fig:it_cumulative_ts}}
\subfloat[][CGSE cumulative gain rates]{\includegraphics[width=.25\textwidth]{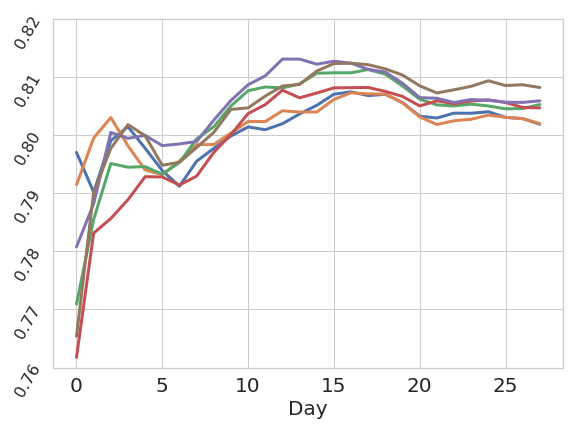}\label{fig:it_cumulative_se}}
\subfloat[][TS daily probability allocation]{\includegraphics[width=.25\textwidth]{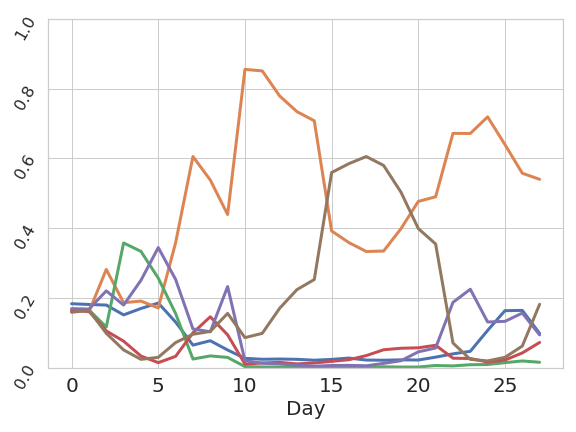}\label{fig:it_allocation_ts}}
\subfloat[][Algorithm running empirical means]{\includegraphics[width=.25\textwidth]{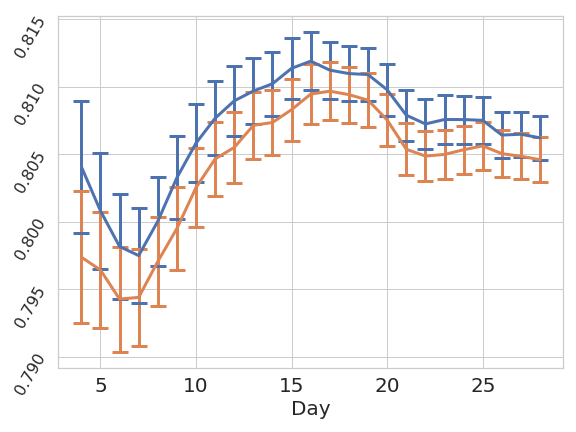}\label{fig:it_ret_compare}}

\caption{\small Live experiment 2: TS fails to obtain significantly higher reward relative to CGSE and gives misleading inferences.}
\label{fig:it_ret}
\end{figure*}

\paragraph{Experiment Results.}
As expected, TS minimizes regret for this problem, but with high probability it is never able to end the experiment and identify the optimal arm. 
This is precisely because regret-minimizing algorithms do not allocate enough impressions to suboptimal arms, compromising the statistical power. While TTTS outperforms TS in terms of identification time, it suffers in terms of regret since in the limit it only gives $\beta =1/2$ of the impressions to the optimal arm. Moreover, in comparison to the other algorithms, it fails to identify the optimal arm quickly. This is primarily because the probability of playing any arm not in the pair that appears closest to being optimal quickly tends to zero, which inflates the confidence intervals on these arms and hinders identification. The uniform algorithm (A/B/N testing) suffers the maximum regret and is also sub-optimal for identification.
CGSE has nearly equal regret to TS as expected. Furthermore, of all algorithms, it identifies the optimal arm the fastest. This experiment demonstrates the potential of CGSE for obtaining the best of three worlds. Finally, BOB suffers nearly double the regret at the termination time as CGSE, while also being slower to identify the optimal arm. This is due to BOB being more conservative and not eliminating arms to guard against fully adversarial problems, whereas our approach is more aggressive. However, as we observe in this non-stationary experiment, CGSE is robust enough to handle real-world data.

\subsection{Online Experiments}
We now present results from tests in our production environment~\cite{Geng2023SAmQ, Kong2023NeuralInsights}.
A control group C and a treatment group T are dialed up with each receiving 50\% of the traffic. In each experiment group, identical sets of content (arms) are scheduled. A TS implementation~\cite{nabi2022EB} allocates traffic among the content in the control group C, while  CGSE allocates traffic among the content in the treatment group T. We dialed up dozens of experiments with this setup, but highlight particularly interesting ones that capture general outcomes and our learnings. 
Appendix~\ref{app_sec:experiments} presents more experiments.

\begin{figure*}[t]
\centering

\subfloat{\includegraphics[width=.3\textwidth]{Figs/legend_arms_4}}
\setcounter{subfigure}{0}
\vspace{-2mm}

\subfloat[][CGSE cumulative gain rates]{\includegraphics[width=.25\textwidth]{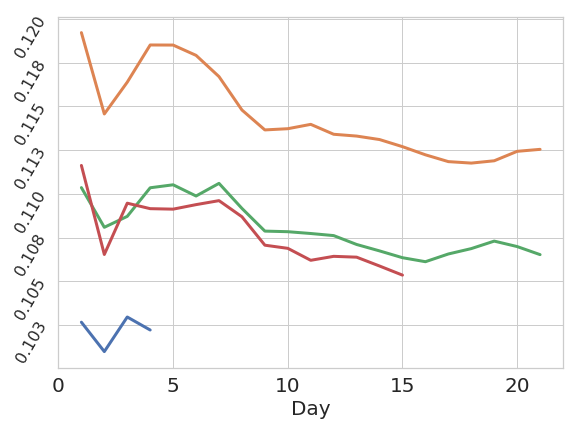}\label{fig:es_cumulative_se}}
\subfloat[][The CGSE minimum upper and lower confidence intervals on the cumulative gain of each arm.]{\includegraphics[width=.75\textwidth]{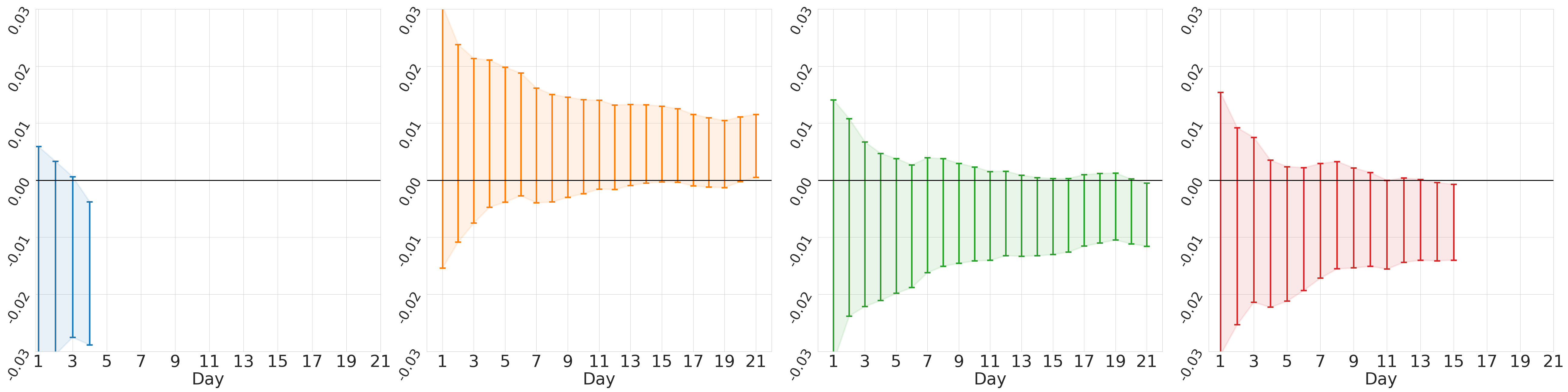}\label{fig:es_conf}}
\caption{\small Live experiment 3: CGSE progressively eliminates suboptimal arms toward identifying the counterfactual optimal. 
}
\label{fig:es_acq}
\end{figure*}
\begin{figure*}[t]
\centering
\begin{minipage}{\textwidth}
\begin{minipage}{.75\textwidth}
\centering
\subfloat{\includegraphics[width=.375\textwidth]{Figs/legend_arms_4}}
\end{minipage}
\begin{minipage}{.4\textwidth}
\centering
\end{minipage}
\end{minipage}
\setcounter{subfigure}{0}
\vspace{-5mm}

\centering
\subfloat[][CGSE cumulative gain rates]{\includegraphics[width=.275\textwidth]{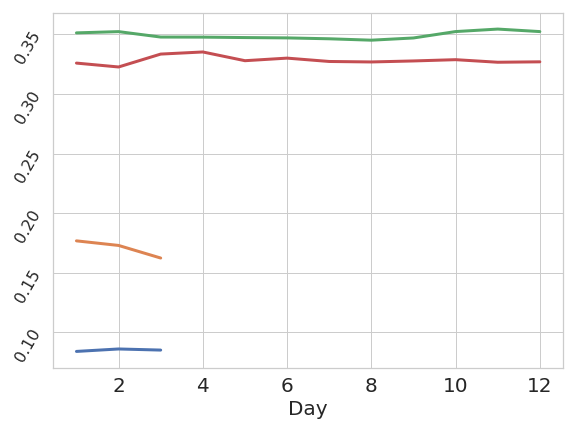}\label{fig:nl_cumulative_se}}
\subfloat[][CGSE cumulative probability allocation]{\includegraphics[width=.275\textwidth]{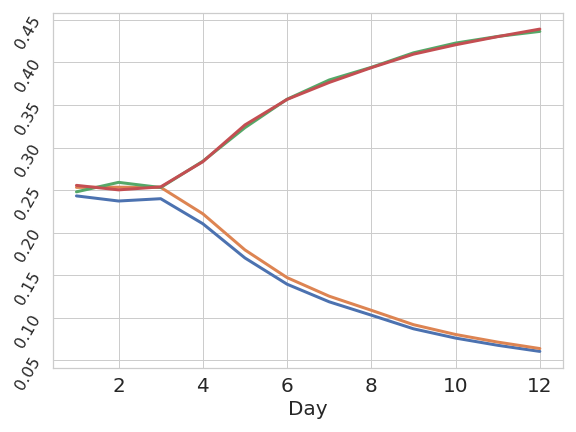}\label{fig:nl_allocation_se}}
\subfloat[][Cumulative gain rate gap and bounds]{\includegraphics[width=.3\textwidth]{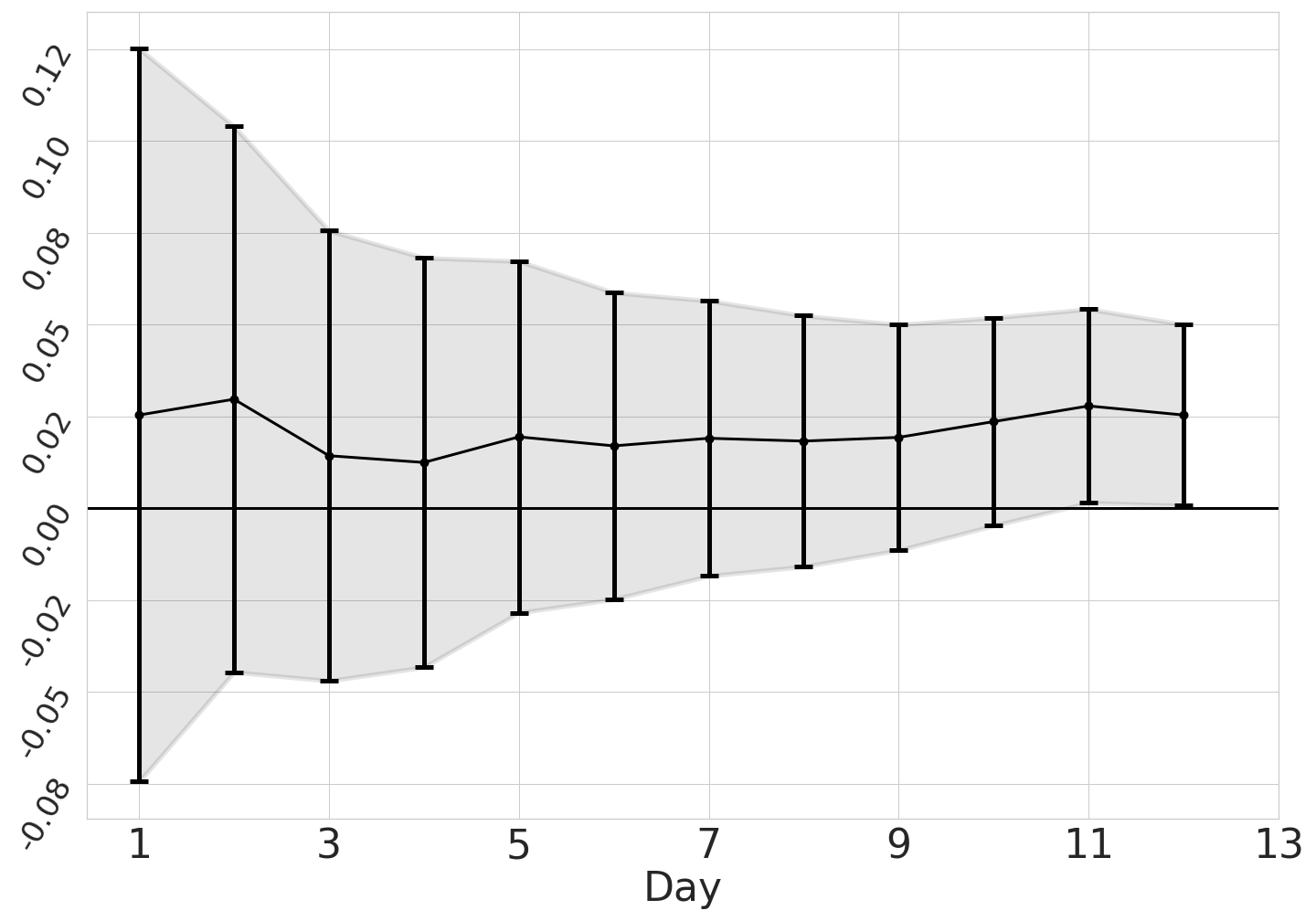}\label{fig:nl_conf}}

\caption{\small Live experiment 4: CGSE quickly eliminates highly suboptimal arms to minimize cost and increase power.}
\label{fig:nl_acq}
\end{figure*}

\subsubsection{Theme 1: Robustness to Non-Stationarity}
\label{sec:robust_exp}
TS lacks guarantees for drawing proper inferences or even minimizing regret on live traffic. 
Since CGSE acts akin to uniform sampling until arms are eliminated, the online experiments allowed us to effectively benchmark TS. The results show that TS fails to minimize regret in practice and it inflates decision-making error rates when using the collected data for inference. In contrast, our approach gives satisfying results with respect to the best of three worlds' objectives.
We illustrate these insights through the results of a pair of experiments with significant time-variation. 

Figure~\ref{fig:de} shows the results of Experiment 1.
After 2 weeks, CGSE eliminates Arm 1 (see Figure~\ref{fig:de_imp_se}), while TS was allocating nearly 100\% of the traffic to this arm at the end of the experiment (see Figure~\ref{fig:de_imp_ts}). 
This may appear to be an example where the arm switched from being the worst performer and became the best performer, but we can validate that this is not the case. In Figure~\ref{fig:de_emp_mean} the solid lines represent the running empirical means of each arm as estimated from the TS algorithm, while the dashed lines represent the running empirical means of each arm as estimated from CGSE. 

We see that there is a huge bias downwards in the running empirical mean estimates of Arms 2-4 when comparing the data from TS (solid lines) and CGSE (dashed lines). This is due to the fact TS is giving little traffic to Arms 2-4 from day 30 onward as their underlying performance improves, while CGSE uniformly allocates traffic among Arms 2-4 and captures this effect. This suggests that TS's confidence in giving all of its traffic to Arm 1 is misplaced and instead CGSE was wise to eliminate Arm 1 early. Specifically, we see that the performance of all arms moves up as TS begins to switch its allocation to Arm 1 and this reinforcing feedback loop causes continued and exacerbated flawed allocations by TS. This is a real-world example of Simpson's paradox. As Figure~\ref{fig:de_compare} shows, this has an impact on the total reward obtained: the daily algorithm empirical means demonstrate that CGSE minimizes regret more effectively toward the end of the experiment. 

Figure~\ref{fig:it_ret} shows the outcome of Experiment 2. The effects between arms in this experiment ended up being low relative to the amount of traffic. Consequently, CGSE did not eliminate any of the arms and simply produced a uniform traffic split over the course of the experiment. In contrast, TS again produces a highly dynamic traffic allocation and gives the vast majority of impressions to Arm 2 throughout much of the experiment (see Figure~\ref{fig:it_allocation_ts}). Yet, we observe that Arm 2 is in fact the arm with the lowest cumulative gain estimate using the data collected by CGSE (see Figure~\ref{fig:it_cumulative_se}), indicating that TS made a mistake in its allocation. This can also be observed through Figure~\ref{fig:it_ret_compare}, which shows that the running empirical means for the algorithms are nearly identical and statistically indistinguishable based on asymptotic confidence intervals. Moreover, standard hypothesis tests (t-test/z-test) on the running empirical mean using the data from TS (see Figure~\ref{fig:it_cumulative_ts}) would have claimed that Arm 2 was positive significant relative to other arms midway through the experiment. This experiment highlights that TS is prone to costly decision-making mistakes (this is a potential Type 1 error) and fails to provide significant regret minimization benefits in production systems, while our approach overcomes the significant time-variation to provide valid inferences while minimizing the cost of experimentation.

\subsubsection{Theme 2: Efficient Always-Valid Inference and Finding the Best}
In most experiments, CGSE was able to identify the optimal arm within the usual time frames that an experiment is allowed to be active. 
We now discuss a pair of successful experiments that demonstrate the benefits of early elimination and the utility of always-valid confidence intervals for practical business uses. 

In Experiment 3 (see Figure~\ref{fig:es_acq}), the counterfactual optimal treatment was identified after 3 weeks of experimentation. 
By early elimination of suboptimal arms, CGSE was able to direct a higher percentage of traffic to the best performing arms and accelerate the comparison. We visualize the internal behavior of the algorithm in Figure~\ref{fig:es_conf} by plotting $\min_{j\in \mc{A}_t} \widehat{G}_{i, j, t}-C(i, j, t,\delta/k)$ and $\min_{j\in \mc{A}_t} \widehat{G}_{i, j, t}+C(i, j, t,\delta/k)$ (normalized to a rate) for each day $t$ that an arm $i\in [k]$ was active, where $\mc{A}_t$ denotes the day's set of active arms. 
These quantities respectively correspond to the lower and upper bounds of the minimum cumulative gain gap.
They can be interpreted as the maximum potential loss (and gain, respectively) relative to the set of active arms, which holds with high probability by the always-valid confidence intervals.

The algorithm eliminates an arm when the minimum upper bound moves below zero, since this means that with high probability there exists another arm with higher cumulative gain. Moreover, the optimal arm has been identified when the lower bound moves above zero, since this means that with high probability the arm has the maximum cumulative gain among the active set. Beyond providing interpretation of the algorithm's behavior, these confidence intervals have the practical utility of being able to quantify the maximum potential loss or gain from making the decision to select an arm from an experiment. This is useful when business constraints necessitate early termination of an experiment.

Figure~\ref{fig:nl_acq} shows the outcome of Experiment 4. In this experiment, Arms 1 and 2 had extremely low cumulative gain rates as compared to Arms 3 and 4. Consequently, CGSE eliminated them after just 3 days and split traffic evenly among Arms 3 and 4. By the end of the experiment when Arm 3 was identified as the counterfactual optimal, nearly 90\% of the traffic had been allocated to Arms 3 and 4 (see Figure~\ref{fig:nl_allocation_se} where Arm 3 and 4 overlap). 
This experiment highlights the importance of adaptive experimentation. A naive uniform allocation would have been extremely costly as well as required a longer experiment, whereas early elimination limits the damage of introducing low-performing arms in an experiment and maximizes the decision-making power for identifying the optimal arm. This can be seen through the always-valid confidence intervals $\widehat{G}_{i, j, t}\pm C(i, j, t,\delta/k)$ (normalized to a rate) for each day $t$ on the gap between arm $i=3$ and arm $j=4$ shown in Figure~\ref{fig:nl_conf}. As more data is collected by the algorithm, the confidence interval steadily shrinks until the lower bound moves above zero, at which point it concludes with high probability that Arm 3 is optimal.

\subsection{Algorithmic Guarantees}
\label{sec:algorithm_guarantees}
This section  guarantees for CGSE for the objective of identifying the counterfactual optimal arm in experiments with daily time-variation, and that it seamlessly adapts to easier, stochastic experimentation data. Proofs are in Appendix~\ref{sec:guarantee_proof}{\inappendix}.

\subsubsection{Guarantees for Correct Inference}
For this analysis, we restrict the allowable form of time-variation to make the fixed-confidence identification problem well-defined, but remark that it still captures a number of meaningful real-world scenarios. Toward this goal, let us define the cumulative gain rate gap relative to an arm $i^{\ast}\in [k]$ for any other arms $j\in [k]$ at any day $t\geq 1$ as the following:
\begin{equation*}
\Delta_{j,t}:=\textstyle (\widebar{n}_t)^{-1}\sum\nolimits_{\tau=1}^tn_{\tau}(\mu_{i^{\ast}, \tau}-\mu_{j, \tau}).
\end{equation*}
\begin{assumption}
There exists an arm $i^{\ast}$ such that for each arm $j\in [k]\setminus \{i^{\ast}\}$ the cumulative gain rate gap $\Delta_{j,t}\geq 0$ for all $t\geq 1$.  
\label{assump:converging}
\end{assumption}
Assumption~\ref{assump:converging} implies that the counterfactual optimal arm is time-independent, which means $i^{\ast}\in\argmax_{i\in [k]}G_{i, t}$ for all $t\geq 1$. Yet, this is a mild restriction given that it allows for daily-time variation among all arms and does not require that the optimal counterfactual arm has the highest mean on each given day. Under this assumption, arm $i^{\ast}$ is not eliminated by CGSE with high probability.

\begin{proposition}
CGSE with $\delta \in (0, 1)$ does not eliminate the optimal arm $i^{\ast}\in [k]$ with probability at least $1-\delta$ under Assumption~\ref{assump:converging}.
\label{lem:1b}
\end{proposition}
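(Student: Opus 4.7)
The plan is to argue by a good-event/union-bound argument built on top of the always-valid confidence interval from Equation~\eqref{eq:conf_def} and the monotonicity property of the cumulative gain implied by Assumption~\ref{assump:converging}. The key observation is that CGSE eliminates $i^{\ast}$ at day $t$ if and only if there exists some $j \in \mathcal{A}$ with $\widehat{G}_{j, i^{\ast}, t} - C(j, i^{\ast}, t, \delta/k) > 0$, so it suffices to show that, with high probability, no such $j$ and $t$ can exist.

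First I would define the good event
\begin{equation*}
\mathcal{E} := \bigl\{ \forall\, t \geq 1,\ \forall\, j \in [k] \setminus \{i^{\ast}\}:\ |\widehat{G}_{j, i^{\ast}, t} - G_{j, i^{\ast}, t}| \leq C(j, i^{\ast}, t, \delta/k) \bigr\}.
\end{equation*}
Applying the always-valid coverage guarantee from Section~\ref{sec:always_valid_section} with tolerance $\delta/k$ to each of the $k-1$ pairs $(j, i^{\ast})$ and union bounding gives $\P(\mathcal{E}) \geq 1 - (k-1)\delta/k \geq 1 - \delta$. On $\mathcal{E}$, the lower confidence bound $\widehat{G}_{j, i^{\ast}, t} - C(j, i^{\ast}, t, \delta/k)$ is a valid lower bound on $G_{j, i^{\ast}, t}$ for every $t$ and every competitor $j$ simultaneously.

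Next I would argue by contradiction that on $\mathcal{E}$ the arm $i^{\ast}$ is never eliminated. Suppose it were eliminated at some day $t$ by some active arm $j$. Then by the elimination rule in Line~8 of Algorithm~\ref{alg:ae}, $\widehat{G}_{j, i^{\ast}, t} - C(j, i^{\ast}, t, \delta/k) > 0$, and on $\mathcal{E}$ this yields $G_{j, i^{\ast}, t} > 0$, i.e., $G_{j, t} > G_{i^{\ast}, t}$. Dividing both sides by $\widebar{n}_t$ gives $\Delta_{j, t} < 0$, contradicting Assumption~\ref{assump:converging}. Hence on $\mathcal{E}$, the optimal arm survives every round, and the claim follows.

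The main technical point requiring care is confirming that the per-pair coverage statement used in the union bound really follows from the confidence construction in Section~\ref{sec:always_valid_section}: the statement in Equation~\eqref{eq:conf_def} is phrased as a simultaneous guarantee over $t$ and $(i,j)$, so I would explicitly note that applying it pairwise with tolerance $\delta/k$ and union bounding over the $k-1$ competitors is conservative but valid, and yields the claimed $1 - \delta$ bound. Everything else is essentially bookkeeping: translating the elimination criterion into an inequality on $\widehat{G}_{j, i^{\ast}, t}$, using the confidence bound to transfer it to $G_{j, i^{\ast}, t}$, and contradicting Assumption~\ref{assump:converging}.
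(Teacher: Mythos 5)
Your proposal is correct and follows essentially the same route as the paper's proof: define the good event on which the always-valid intervals for all pairs $(j,i^{\ast})$ hold simultaneously, union bound to get probability $1-\delta$, and then show the elimination criterion $\widehat{G}_{j,i^{\ast},t}-C(j,i^{\ast},t,\delta/k)>0$ is incompatible with $\Delta_{j,t}\geq 0$ on that event. The only difference is cosmetic — you phrase the final step as a contradiction while the paper writes the same inequality chain directly.
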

We prove this by showing the lower confidence bound on the cumulative gain gap between any arm $j\neq i^{\ast}$ and $i^{\ast}$ cannot fall below zero. 
The above does not guarantee that we necessarily find the best arm. To do so, we need an additional assumption. 

\begin{assumption}
For each arm $j\in [k]\setminus \{i^{\ast}\}$  there exists a day $t_0> 0$ and $\epsilon>0$ such that for $t>t_0$, $\Delta_{j,t}\geq \epsilon$.
\label{assump:mingap}
\end{assumption}
\begin{proposition}
CGSE with $\delta \in (0, 1)$ eliminates each arm $j \in [k]\setminus \{i^{\ast}\}$ with probability at least $1-\delta$ under Assumptions~\ref{assump:converging}--\ref{assump:mingap}.
\label{lem:2b}
\end{proposition}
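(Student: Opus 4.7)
My plan is to reuse the high-probability event underlying Proposition~\ref{lem:1b} and show that on it, every suboptimal arm is certifiably eliminated in finite time. Define the good event $\mc{E}$ on which the always-valid bound $|\widehat{G}_{i^\ast,j,t}-G_{i^\ast,j,t}|<C(i^\ast,j,t,\delta/k)$ holds simultaneously for every $t\geq 1$ and every $j\in [k]\setminus\{i^\ast\}$; a union bound over the at most $k-1$ competing arms (already baked into the $\delta/k$ choice in the algorithm) gives $\P(\mc{E})\geq 1-\delta$. On $\mc{E}$, Proposition~\ref{lem:1b} guarantees $i^\ast\in\mc{A}_t$ for every $t$, so the elimination test $\widehat{G}_{i^\ast,j,t}-C(i^\ast,j,t,\delta/k)>0$ is always a valid trigger for removing a suboptimal $j$.

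Second, I would control the confidence half-width while $j$ remains active. The uniform-over-active-set rule in line 4 of Algorithm~\ref{alg:ae} ensures $p_{i^\ast,t},p_{j,t}=1/|\mc{A}_t|\geq 1/k$ for all such $t$. Using $\widehat{\mu}(1-\widehat{\mu})\leq 1/4$ then yields the deterministic bound
\begin{equation*}
\widehat{V}_{i^\ast,j,t}\;\leq\;\tfrac{k}{2}\,\widebar{n}_t,
\end{equation*}
and substituting into Eq.~\eqref{eq:conf_def} gives $C(i^\ast,j,t,\delta/k)=O\bigl(\sqrt{k\,\widebar{n}_t\,\log(k\widebar{n}_t/\delta)}\bigr)$, which grows only on the order of $\sqrt{\widebar{n}_t\log\widebar{n}_t}$.

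Third, I would compare this half-width against the true gap. By Assumption~\ref{assump:mingap} and the identity $G_{i^\ast,j,t}=\widebar{n}_t\,\Delta_{j,t}$, for all $t>t_0$ we have $G_{i^\ast,j,t}\geq\widebar{n}_t\,\epsilon$, which is linear in $\widebar{n}_t$. Consequently there is a finite day $t^\dagger_j$ at which $\widebar{n}_{t^\dagger_j}\epsilon>2\,C(i^\ast,j,t^\dagger_j,\delta/k)$. On $\mc{E}$ this gives $\widehat{G}_{i^\ast,j,t^\dagger_j}>G_{i^\ast,j,t^\dagger_j}-C(i^\ast,j,t^\dagger_j,\delta/k)>C(i^\ast,j,t^\dagger_j,\delta/k)$, so the elimination rule fires and $j$ exits $\mc{A}$ no later than day $t^\dagger_j$. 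Applying this argument to each suboptimal $j$ on the same event $\mc{E}$ yields the claim.

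\textbf{Main obstacle.} The delicate piece is bounding the data-dependent variance $\widehat{V}_{i^\ast,j,t}$ before the elimination event occurs; this is exactly why the uniform-allocation design of CGSE matters, as it uniformly lower-bounds the propensities by $1/k$ while $j$ is active and turns the inverse-propensity-weighted variance into a bound linear in $\widebar{n}_t$ rather than something that could blow up under adaptive allocation. A minor technicality is the need for $\widebar{n}_t\to\infty$ so that $t^\dagger_j$ is finite; this is immediate under any mild lower bound on daily traffic, and can otherwise be carried as a standing assumption on the experimentation stream.
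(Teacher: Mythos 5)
Your proposal is correct and follows essentially the same route as the paper's proof: condition on the good event where the always-valid intervals hold, invoke Proposition~\ref{lem:1b} to keep $i^{\ast}$ active, and then show the gap $G_{i^{\ast},j,t}=\widebar{n}_t\Delta_{j,t}\geq \epsilon\,\widebar{n}_t$ grows linearly while the confidence width grows only as $\mc{O}(\sqrt{\widebar{n}_t\log \widebar{n}_t})$, forcing elimination in finite time. The only differences are cosmetic and in your favor: you use $i^{\ast}$ directly as the eliminating comparator (the paper routes through the empirical argmax over the active set, which the existential elimination rule does not require), and you make explicit the bound $\widehat{V}_{i^{\ast},j,t}\leq \tfrac{k}{2}\widebar{n}_t$ via $p_{\cdot,t}\geq 1/k$ and $\widehat{\mu}(1-\widehat{\mu})\leq 1/4$, a step the paper's proof only asserts.
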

We prove this by arguing that the lower confidence bound on the cumulative gain rate gap between arm $i^{\ast}$ and any other arm $j\neq i^{\ast}$ must move above $\epsilon$, since the normalized confidence intervals necessarily shrink toward zero as a result of the allocation.
 
Put together, Propositions~\ref{lem:1b} and~\ref{lem:2b} allow us to conclude that, with probability at least $1-\delta$, the optimal arm is returned by CGSE under Assumptions~\ref{assump:converging} and \ref{assump:mingap}. This gives a strong guarantee in a general time-varying setting for the correctness of CGSE.

\subsubsection{Guarantees in Stochastic Environments \& Comparison}
In the stochastic stationary setting, CGSE reduces to a closely related version of the classical successive elimination algorithm~\cite{even2006action}. Thus, it obtains the known guarantees for the algorithm in this situation. Specifically, if the environment is stationary,  we have a guarantee that the algorithm will return the optimal arm with probability at least $1-\delta$ in a number of samples not exceeding $\mc{O}(\log(k/\delta)\sum_{i=1}^k \Delta_i^{-2})$ with regret at most $\mc{O}(\log(k/\delta)\sum_{i=1}^k \Delta_i^{-1})$, both of which are near-optimal~\cite{kaufmann2016complexity, even2006action}.

In the best-arm identification literature there is little work about handling non-stochastic environments. Our work is perhaps most closely linked to the Best-of-Both-Worlds setting~\cite{abbasi2018best}. 
The authors propose a new sample complexity, $H_{BOB}$, which is necessarily larger than $\sum_{i=1}^k \Delta_i^{-2}$. They provide a conservative algorithm (see Section~\ref{sec:offline_exps}) which in the stochastic case requires no more samples than $H_{BOB}\log^2(k)\log(1/\delta)$ while being optimal in the adversarial case. We take a different, more aggressive approach that effectively guarantees an optimal sample complexity in stochastic settings, but give up on strong guarantees in fully adversarial settings.

\section{Conclusion}
This paper provides a rigorous discussion of experimentation objectives and the trade-offs of algorithmic methods for optimizing them. 
It demonstrates the shortfalls of the standard metric and estimator for inference when traffic is dynamically allocated and performance is time-varying.
It proposes a metric, cumulative gain, that measures the counterfactual of the expected reward an arm could obtain if it received all the traffic. 
We propose an unbiased estimator of this metric, and empirically validate it. Finally, we combine cumulative gain estimators, always-valid confidence intervals, and an elimination algorithm to form a novel experimentation system that provides a robust and flexible tool for sequentially monitoring of experiments with fast identification guarantees at minimal cost. 


\bibliographystyle{ACM-Reference-Format}
\bibliography{refs}

\appendix

\section*{Appendix}
\section{Related Works}\label{sec:RelatedWork}

There is extensive literature on adaptive experimentation, time variation, and inference. We summarize some of the main themes. 

\textbf{Adaptive Experimentation and Best-Arm Identification.} The standard reference on bandit algorithms in both regret minimizing and pure-exploration settings is ~\cite{lattimore2018bandit}. Thompson Sampling~\cite{russo2018tutorial} is perhaps the most popular bandit paradigm. An early work considering the regret vs best-arm tradeoff is~\cite{audibert2010best}. Other bandit methods such as UCB have also been modified for this setting~\cite{jamieson2014lil}. The most popular algorithm in practice is Top-Two Thompson Sampling~\cite{russo2016simple}. Our method is based on Successive Elimination~\cite{even2006action}.

\textbf{Time-Variation.} A large amount of work has been done on regret minimizing algorithms which are robust to time variation. This includes the famed EXP3 algorithm~\cite{lattimore2018bandit}. There is far less work on this in the pure exploration setting with the notable exception of~\cite{abbasi2018best} and a recent extension to linear bandits~\cite{xiong2023b, Weltz2023heteroskedastic}. 

\textbf{Inference.} Anytime valid confidence intervals have been gaining popularity in online experimentation because of their immunity to peeking and inflated Type-1 errors. We refer the reader to ~\cite{johari2017peeking, howard2021time}. Due to the lack of clear statistical properties of traditional estimators used on adaptively collected data, recent work focused on developing procedures and methods for good inference. This includes~\cite{deshpande2018accurate, deliu2021efficient, hadad2021confidence, zhang2020inference, li2024metric}. Unfortunately none of these works effectively handles the time-varying setting we consider, so their estimators are not immediately applicable.

\section{Supplemental Experiments}
\label{app_sec:experiments}
In this appendix, we present more online experiments. 
We begin with an experiment that illustrates many of the claims we have made in this paper. Next we present examples highlighting robustness to non-stationarity, efficient always-valid inference, and finding the best arm. As described previously, a control group C and a treatment group T are dialed up with each receiving 50\% of the traffic. In each experiment group, identical sets of content are scheduled. TS allocates traffic among the content in control group C, while CGSE allocates traffic among the content in treatment group T.

\subsection{Online Experiment Case Study}
In this case study, we go through full detail of an online experiment from which a number of illuminating observations can be made.

\paragraph*{Comparison of Allocations and Estimates.} We begin by presenting the daily traffic allocations and the resulting arm performance estimates to qualitatively compare the behaviors of the algorithms. In particular, Figures~\ref{fig:impression_ts}--\ref{fig:impression_se} show the daily traffic allocations over the arm set, while Figures~\ref{fig:success_ts}--\ref{fig:success_se} show the observed arm performance estimates.
Observe that TS produces a highly dynamic traffic allocation in an effort to maximize the accrued successes during the experiment (Figure~\ref{fig:impression_ts}). In contrast, CGSE has a uniform traffic allocation over the active arms, with arms eliminated as soon as they can be proven suboptimal (Figure~\ref{fig:impression_se}). Together with the arm performance estimates resulting from the traffic allocations, a number of illuminating observations can be made from this data. Arm 2 and Arm 3 have the highest cumulative gain rates throughout the experiment and nearly equal performance (Figure~\ref{fig:success_se}). Yet, the running empirical means resulting from TS are such that Arm 2 has a significant gap with Arm 3, and even falls behind Arm 1 for a period of time (Figure~\ref{fig:success_ts}). This highlights a key problem with using the running empirical mean in combination with regret-minimizing algorithms for experimentation: performance estimates are biased and this can result in erroneous decision-making~\cite{shin2019bias}. Moreover, there are time periods in which TS allocates the vast majority of traffic to suboptimal arms including Arms 1 and 5 (Figure~\ref{fig:impression_ts}). This highlights that heuristic decision-making rules based on traffic allocation or TS model posterior distribution commonly result in flawed decision-making in experimentation; a fact that is known in the academic community but often brushed off in industry~\cite{johari2017peeking}.

\paragraph*{Comparison of Confidence Intervals on Cumulative Gain.} So far, the results presented from this experiment have reinforced that accurate decision-making in experimentation cannot be made reliably from sample observations of adaptively collected data. Recall that our solution to combat this challenge is to base performance evaluation on the cumulative gain metric. To quantify the uncertainty in these estimates and make comparisons between the performance of treatments, we rely on always-valid confidence intervals that can be monitored throughout an experiment. Crucially, the amount of uncertainty and consequently the size of the confidence intervals is highly dependent on the traffic allocation. Algorithms that balance traffic intelligently over treatments to reduce this uncertainty reach decisions faster, while methods that more greedily assign traffic to treatments in order to minimize experimentation cost take longer.

\begin{figure*}[t]
\centering
\subfloat{\includegraphics[width=.4\textwidth]{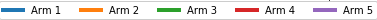}}
\setcounter{subfigure}{0}

\subfloat[][TS daily traffic allocation]{\includegraphics[width=.25\textwidth]{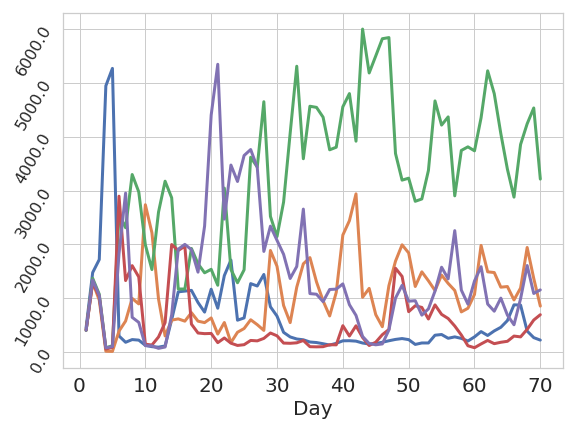}\label{fig:impression_ts}}
\subfloat[][CGSE daily traffic allocation]{\includegraphics[width=.25\textwidth]{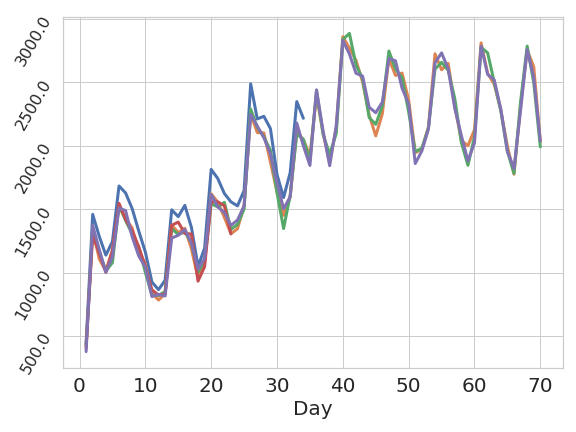}\label{fig:impression_se}}
\subfloat[][TS running empirical means]{\includegraphics[width=.25\textwidth]{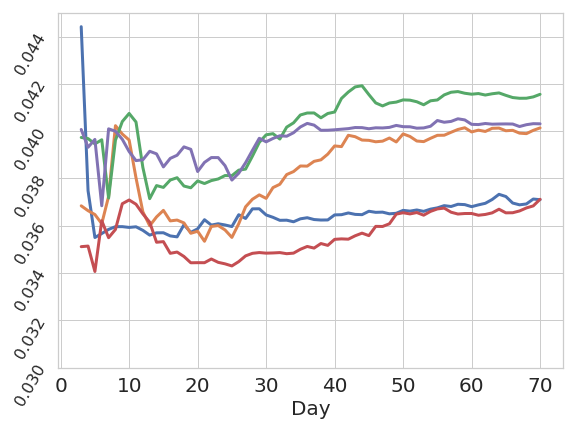}\label{fig:success_ts}}
\subfloat[][CGSE cumulative gain rates]{\includegraphics[width=.25\textwidth]{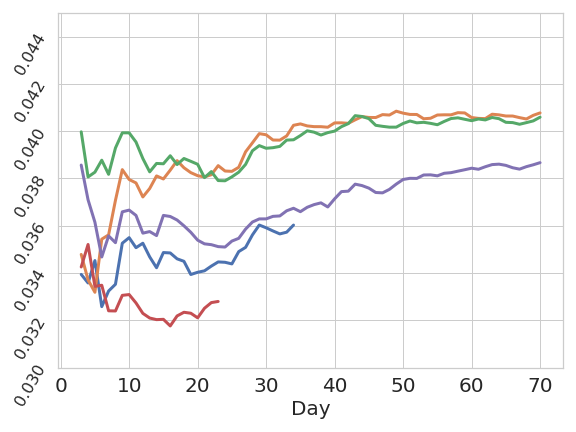}\label{fig:success_se}}
\caption{\small Live experiment 5: experiment case study traffic allocations and estimates.}
\label{fig:impression}
\end{figure*}

\begin{figure*}[h]
\centering
\subfloat{\includegraphics[width=.4\textwidth]{Figs/legend_arms}}
\setcounter{subfigure}{0}

\subfloat[][TS]{\includegraphics[width=.9\textwidth]{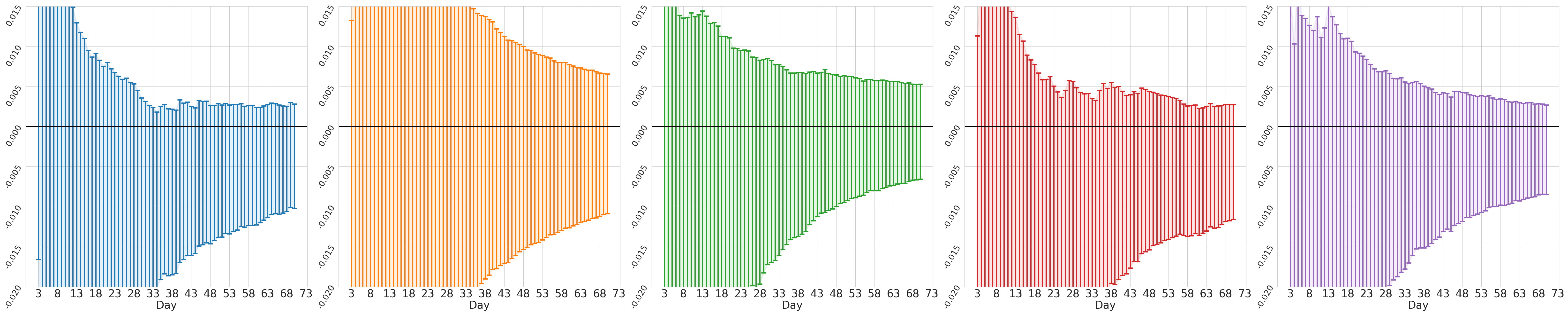}\label{fig:conf_ts}}

\subfloat[][CGSE]{\includegraphics[width=.9\textwidth]{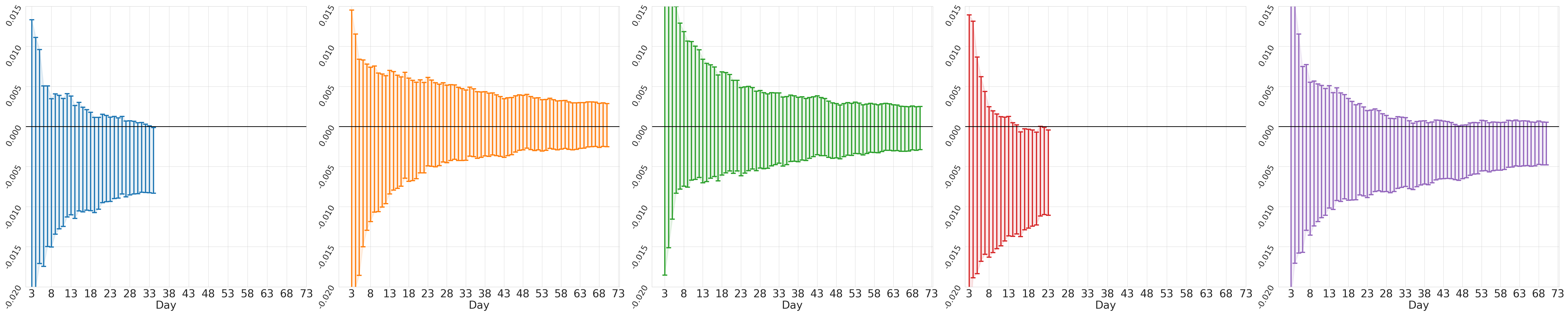}\label{fig:conf_se}}

\caption{\small Live experiment 5: The minimum upper and lower confidence intervals on the cumulative gain of each arm. }
\label{fig:conf}
\end{figure*}

We validate this through the experimental data of Figure~\ref{fig:conf}. In particular, we show the minimum lower and upper bounds using the always-valid confidence interval on the cumulative gain rate for each arm in comparison to the active set of arms. That is, we show $\min_{j\in \mc{A}_t} \widehat{G}_{i, j, t}-C(i, j, t,\delta/k)$ and $\min_{j\in \mc{A}_t} \widehat{G}_{i, j, t}+C(i, j, t,\delta/k)$ (normalized to a rate) for each day $t$ that an arm $i\in [k]$ was active, where $\mc{A}_t$ denotes the day's set of active arms. Again, these quantities can be interpreted as the maximum potential loss (and gain, respectively) relative to the set of active arms, which holds with high probability by the always-valid confidence intervals. 

The data resulting from TS and CGSE are presented in Figure~\ref{fig:conf_ts} and Figure~\ref{fig:conf_se}, respectively. 
If the upper confidence bound is below zero, this indicates that the arm is not the counterfactual optimal. Observe that with the data from TS (Figure~\ref{fig:conf_ts}), no definitive decisions can be made comparing the performance of arms. This is a direct result of the erratic traffic allocation. In contrast, the CGSE data (see Figure~\ref{fig:conf_se}) finds that both Arms 4 and 1 are not optimal early in the experiment.
Consequently, they were eliminated from consideration and the resulting traffic was shifted to the remaining arm for the rest of the experiment to increase decision-making power. Moreover, Arm 5 was close to being eliminated and likely would have been removed if the experiment continued on marginally longer. This data is evidence of the significant increase in decision-making power that results from CGSE and points to the potential for faster experiments and higher overall experimentation throughput by using the algorithm in place of TS.

\paragraph{Comparison of Algorithm Regret Performance.}
To finish our analysis of this representative experiment, we compare the running empirical means of the algorithms during the experiment. As has been alluded to, TS is an optimal procedure for minimizing regret in stationary environments. CGSE gives up some cost minimization in near stochastic environments to be able to make decisions faster. 
The trade-off is not symmetric between regret minimization and identification time, and consequently we expect that the loss in metric accrual during experiments is compensated for by the ability to make decisions faster and launch more experiments as a result. Figure~\ref{fig:policy} shows the running empirical means over the course of the experiment for TS and CGSE. As can be expected given that this experiment was only mildly non-stationary, TS outperforms CGSE by this metric. However, the gap is relatively small with a relative lift of 2.45\% over the duration of the experiment. This is the price of adaptive experimentation for inference, which is markedly less than the price of a standard A/B/N experiment.

\begin{figure}[h]
\subfloat{\includegraphics[width=.65\linewidth]{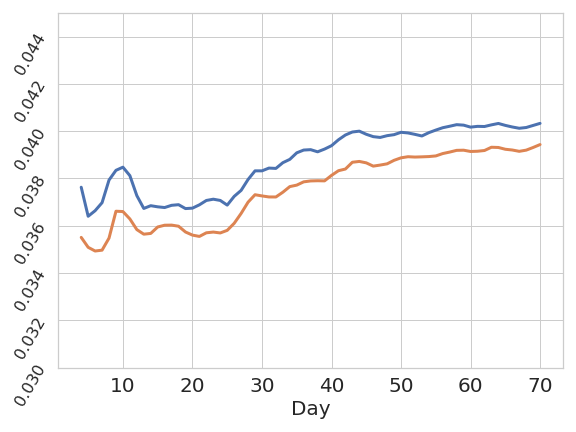}}
\caption{\small Running empirical means for TS (blue) and CGSE (orange).}
\label{fig:policy}
\end{figure}

\subsection{Robustness to Non-Stationarity and Always-Valid Inference}
We now show more live experiment results that reinforce the observations made in the main body of the paper. Specifically, the robustness to non-stationarity for CGSE (which TS lacks) and the utility of always-valid inference. 
\begin{figure*}[h]
\centering
\subfloat{\includegraphics[width=.35\textwidth]{Figs/legend_arms_4}}
\setcounter{subfigure}{0}

\subfloat[][TS daily empirical means]{\includegraphics[width=.3\textwidth]{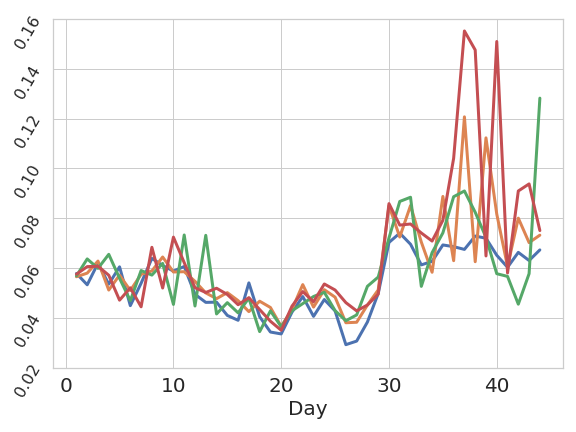}\label{fig:de_day_ts}}
\subfloat[][CGSE daily empirical means]{\includegraphics[width=.3\textwidth]{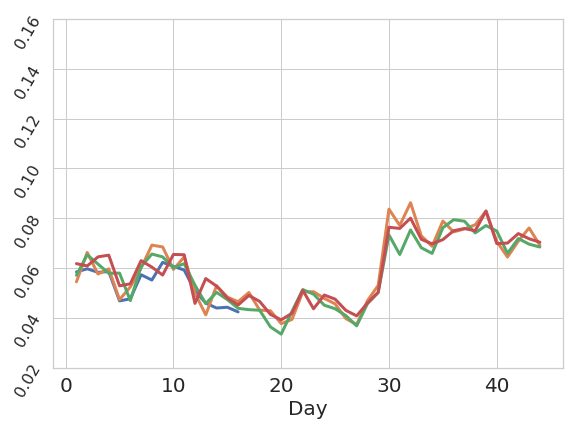}\label{fig:de_dau_se}}
\caption{\small Live Experiment 1 daily empirical means.}
\label{fig:de_daily}
\end{figure*}

To begin, we revisit live experiment 1 discussed in Section~\ref{sec:robust_exp} and illustrated in Figure~\ref{fig:de}. Here, we show the daily empirical means for the arms in the TS and CGSE data in Figures~\ref{fig:de_day_ts} and~\ref{fig:de_dau_se}, respectively.
As mentioned in Section~\ref{sec:robust_exp}, TS began to shift traffic to Arm 1 at day 26 before finally giving nearly all the traffic to this arm for the remainder of the experiment after day 30 despite it being clearly the worst performing arm. The reason for this behavior becomes more clear looking at Figure~\ref{fig:de_daily}. Between days 22-25, TS gave almost no traffic to Arm 1. During this time period, the daily mean performance of each arm moves downward and consequently the posterior means move downward as well in TS for the arms receiving samples. However, the posterior mean for Arm 1 is unchanged since it did not receive samples. This results in Arm 1 starting to get more traffic between days 26-30. During this time period, the underlying performance of all arms move up significantly. This has the most impact on the posterior mean of Arm 1 in TS since it has begun to receive the most traffic. As this continues, the observations only reinforce the behavior of TS and this results in Arm 1 getting nearly all the traffic for the remainder of the experiment. This is a clear example of Simpson's paradox negatively impacting TS, while CGSE overcomes this issue by using the cumulative gain estimator and a traffic allocation that controls the variance of the estimator.

We observe a similar phenomenon in the experiment presented in Figure~\ref{fig:us_spc}. In this experiment, CGSE produced a uniform traffic split over the arms during the experiment due to the small gaps and traffic. Yet, we observe that the arm that receives the most traffic in the TS data is in fact the arm with the lowest observed running empirical mean in the data from CGSE, indicating that the algorithm has made a mistake in terms of traffic allocation and this can also result in misleading inferences.
\begin{figure*}[h]
\centering
\subfloat{\includegraphics[width=.25\textwidth]{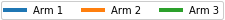}}
\setcounter{subfigure}{0}

\subfloat[][CGSE running empirical means]{\includegraphics[width=.3\textwidth]{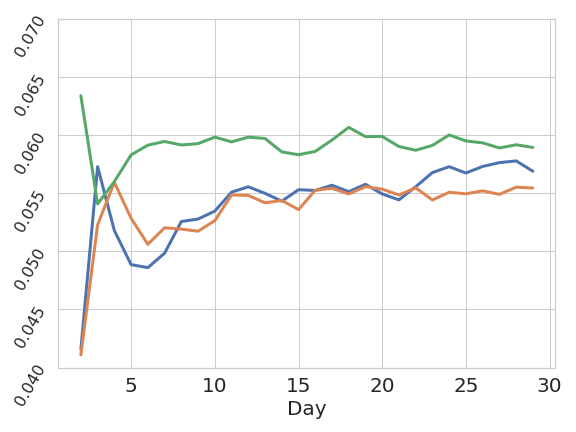}\label{fig:us_spc_cumulative_se}}
\subfloat[][TS running empirical means]{\includegraphics[width=.3\textwidth]{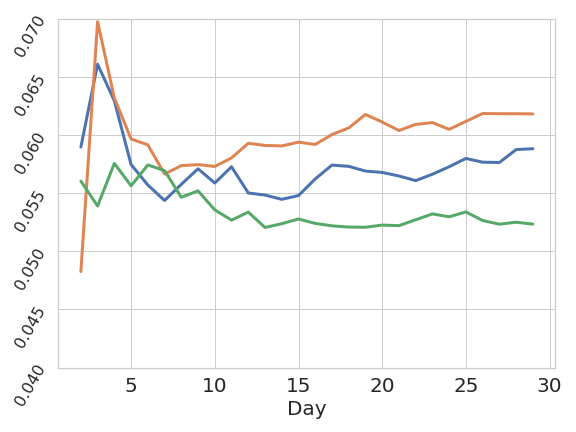}\label{fig:us_spc_cumulative_ts}}
\subfloat[][TS traffic allocation]{\includegraphics[width=.3\textwidth]{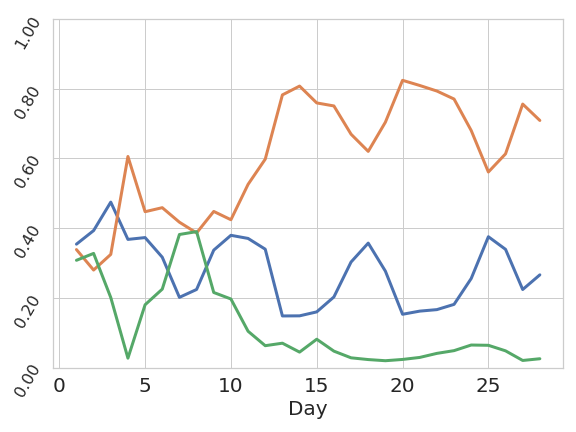}\label{fig:us_spc_allocation_ts}}

\caption{\small Live Experiment 6: TS shifts the majority of traffic to the worst performing arm.}
\label{fig:us_spc}
\end{figure*}

We conclude with the experiment shown in Figure~\ref{fig:us_updp} where we see the weekly rolling empirical means over the arms and the running empirical means in a situation where no arm was eliminated until the final day of the experiment so the allocation was uniform. There is significant daily time-variation in the arm performances, yet the orange arm is almost always empirically best on a weekly rolling basis. This does not effect CGSE since it is similar to a constant gap setting and at the end of this experiment the optimal arm was identified as can be seen through the confidence intervals in Figure~\ref{fig:us_cumulative_se}. In contrast, for the TS, no decision could be made based on the data that was collected.
\begin{figure*}[h]
\centering
\subfloat{\includegraphics[width=.25\textwidth]{Figs/legend_arms_3}}
\setcounter{subfigure}{0}

\subfloat[][CGSE Rolling Empirical Means]{\includegraphics[width=.32\textwidth]{Figs/exp_5/SE/se_exp_5_7_roll_success_rate}\label{fig:us_roll_se}}
\subfloat[][The minimum upper and lower confidence intervals on the cumulative gain of each arm. ]{\includegraphics[width=.68\textwidth]{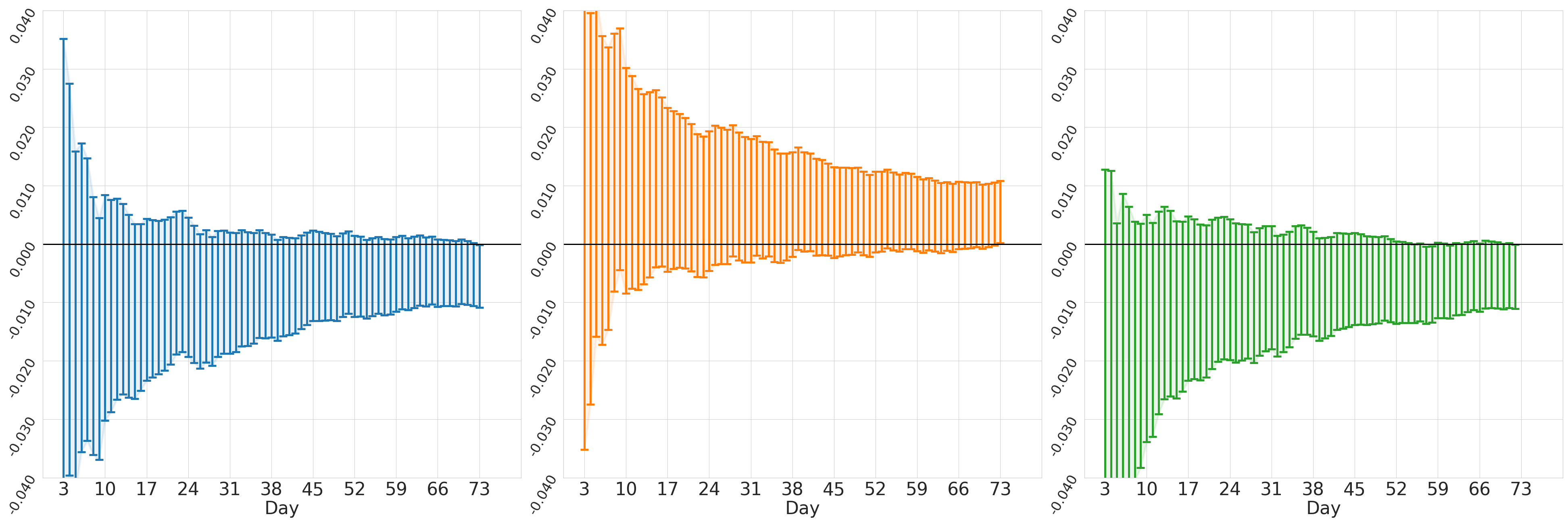}\label{fig:us_cumulative_se}}

\caption{\small Live Experiment 7: CGSE identifies the optimal arm with significant non-stationarity.}
\label{fig:us_updp}
\end{figure*}
\clearpage
\onecolumn

\section{Proof of Proposition~\ref{prop:unbiased}}\label{sec:unbiased_proof}
Assume that on each day $t\in [T]$ of the experiment, a probability vector $p_t = (p_{1,t}, \cdots, p_{k,t})\in \Delta_k$ is chosen according to the history up to day $t$. Then, each visitor $s_t\in [n_t]$ on day $t\in [T]$ is shown an arm $I_{s_t}\in [k]$ that is selected with probability $\P(I_{s_t} = i)= p_{i,t}$ and a corresponding reward $r_{s_t}$ is observed. 
Formally, define the history up to any day $t\in [T]$ by the filtration 
\begin{equation}
\mc{F}_{t-1} = \{\cup_{\tau=1}^{t-1}\cup_{s_{\tau}=1}^{n_{\tau}}(r_{s_t}, I_{s_{\tau}}, p_{\tau})\}.
\label{eq:filt}
\end{equation}
Now, to see that this is an unbiased estimator of the cumulative gain, observe that we have the following, 
\begin{align*}
    \E[\widehat{G}_{i,T}] 
    &= \E\Big[\sum\nolimits_{t=1}^T \frac{r_{i,t}}{p_{i,t}}\Big]\\
    &= \sum\nolimits_{t=1}^T \E\Big[\frac{1}{p_{i,t}}\sum\nolimits_{s_t=1}^{n_t} \mathbbm{1}\{I_{s_t} = i\} r_{s_t}\Big]\\
    &= \sum\nolimits_{t=1}^T \E\Big[\frac{1}{p_{i,t}}\E\Big[\sum\nolimits_{s_t=1}^{n_t} \mathbbm{1}\{I_{s_t} = i\} r_{s_t}|\mc{F}_{t-1}\Big]\Big]\\
    &= \sum\nolimits_{t=1}^T \E\Big[\frac{1}{p_{i,t}}\sum\nolimits_{s_t=1}^{n_t} p_{i,t} \mu_{i,t}\Big]\\
    &= \sum\nolimits_{t=1}^T n_t\mu_{i,t}.
\end{align*}
This shows the estimator $\widehat{G}_{i,T}$ is unbiased since $\E[\widehat{G}_{i,T}] =G_{i, T}$.

\section{Fixed Effect Variance Reduced Estimation}
\label{sec:reduced_variance}
We now provide details on the variance reducing cumulative gain (VCGR) estimator described in Section~\ref{sec:always_valid_section} for situations in which the gap between any pair of arms is constant on each day. 
Specifically, we assume that there is a shift $\gamma_t$ on each day $t\in [T]$ of an experiment so that the mean of any arm $i\in [k]$ on day $t$ is given by $\mu_{i,t} = \mu_{i} +\gamma_t$. Consequently, the gap between a pair of arms $i,j\in [k]$ given by $\Delta := (\mu_i+\gamma_t) - (\mu_j + \gamma_t)=\mu_i - \mu_j$ and is constant on each day.
In this situation, we are interested in measuring the gap $\Delta$, and our proposed Variance Reduced Cumulative Gain (VRCG) estimator is given by:
\begin{equation*}
\widehat{G}_{i, j, T}^{\ast} = \sum\nolimits_{t=1}^T (\widehat{\mu}_{i,t} - \widehat{\mu}_{j,t}) w_t,
\end{equation*}
where 
\begin{equation*}
  w_t = \big(\sum\nolimits_{t=1}^T (n_{i, t}^{-1} + n_{j, t}^{-1})^{-1}\big)^{-1} (n_{i, t}^{-1} + n_{j, t}^{-1})^{-1} \quad \forall \ t\in [T].
\end{equation*}
For the analysis to follow, we assume that the arm sample counts on day $t\in [T]$ are non-zero and independent of the history up to day $t$ defined by the filtration in Equation~\eqref{eq:filt}. As a result of this independence, the variance reduced cumulative gain estimator is an unbiased estimate of the difference of means for any arms $i, j\in [k]$ after $T$ days of the experiment:
\begin{equation*}
  \E\Big[ \sum\nolimits_{t=1}^T (\widehat{\mu}_{i,t} - \widehat{\mu}_{j,t}) w_t \Big] 
  = \sum\nolimits_{t=1}^T \E[(\widehat{\mu}_{i,t} - \widehat{\mu}_{j,t})] w_t
    = \sum\nolimits_{t=1}^T (\mu_{i} - \mu_{j}) w_t 
    = \mu_i - \mu_j.
\end{equation*}
In the following result and proof, we show that the variance reduced cumulative gain estimator minimizes the variance of the estimate.
\begin{proposition}
The variance of the estimator $\sum\nolimits_{t=1}^T (\widehat{\mu}_{i,t} - \widehat{\mu}_{j,t}) w_t$ is minimized with $w_t=(1/n_{i, t}+1/n_{j, t})^{-1}\big(\sum\nolimits_{\tau=1}^{T}(1/n_{i, \tau}+1/n_{j, \tau})^{-1}\big)^{-1}$ for all $t\in [T]$ and the resulting variance of the estimator is $\big(\sum\nolimits_{\tau=1}^{T}(1/n_{i, \tau}+1/n_{j, \tau})^{-1}\big)^{-1}$.
\label{lem:var_reduced}
\end{proposition}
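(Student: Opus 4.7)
The plan is to recognize this as a textbook weighted-estimator variance minimization problem, subject to a unit-sum constraint on the weights. First I would observe that, under the fixed-effect model $\mu_{i,t}=\mu_i+\gamma_t$ used throughout Appendix~\ref{sec:reduced_variance}, each daily difference $\widehat{\mu}_{i,t}-\widehat{\mu}_{j,t}$ is an unbiased estimate of $\mu_i-\mu_j$, so the weighted sum $\sum_t w_t(\widehat{\mu}_{i,t}-\widehat{\mu}_{j,t})$ is unbiased for $\mu_i-\mu_j$ exactly when $\sum_{t=1}^T w_t = 1$. This is the constraint the optimization must respect.

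Next I would compute the variance. Because the sample sizes $n_{i,t},n_{j,t}$ are assumed independent of the history and the Bernoulli observations on different days are independent, we get
\begin{equation*}
\mathrm{Var}\Bigl(\sum\nolimits_{t=1}^T w_t(\widehat{\mu}_{i,t}-\widehat{\mu}_{j,t})\Bigr) = \sum\nolimits_{t=1}^T w_t^2\,\sigma_t^2,
\end{equation*}
where $\sigma_t^2:=\mathrm{Var}(\widehat{\mu}_{i,t}-\widehat{\mu}_{j,t})$. Under the implicit sub-Gaussian/unit-variance-per-observation convention used in deriving the VRCG weights (equivalently, treating each arm as having variance proxy $1/n_{i,t}$), one has $\sigma_t^2 = 1/n_{i,t}+1/n_{j,t}$.

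The problem then reduces to minimizing $\sum_t w_t^2 \sigma_t^2$ over $\{w_t\}$ with $\sum_t w_t=1$. The cleanest route is Cauchy--Schwarz: for any feasible $w$,
\begin{equation*}
1 = \Bigl(\sum\nolimits_t w_t\Bigr)^2 = \Bigl(\sum\nolimits_t (w_t\sigma_t)\cdot \sigma_t^{-1}\Bigr)^2 \leq \Bigl(\sum\nolimits_t w_t^2 \sigma_t^2\Bigr)\Bigl(\sum\nolimits_t \sigma_t^{-2}\Bigr),
\end{equation*}
so $\sum_t w_t^2 \sigma_t^2 \ge 1/\sum_t \sigma_t^{-2}$, with equality iff $w_t\sigma_t \propto \sigma_t^{-1}$, i.e.\ $w_t \propto \sigma_t^{-2} = (1/n_{i,t}+1/n_{j,t})^{-1}$. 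Normalizing so that $\sum_t w_t = 1$ gives exactly the weights in the proposition, and the resulting variance lower bound $1/\sum_\tau (1/n_{i,\tau}+1/n_{j,\tau})^{-1}$ is attained. (A Lagrange-multiplier calculation gives the same answer equally quickly and could be used as an alternative.)

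There is no real obstacle here once the variance formula is in hand; the only subtle point is making explicit which variance proxy is being used to turn Bernoulli noise into the clean $\sigma_t^2 = 1/n_{i,t}+1/n_{j,t}$ form, so I would state that convention up front and then the optimization is a one-line Cauchy--Schwarz application. Everything else — unbiasedness, additivity of variances across independent days, and substitution of the optimal $w_t^\ast$ to recover the stated minimum variance — is routine.
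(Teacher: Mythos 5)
Your proof is correct and reaches the same weights and minimum variance, but it solves the optimization step by a different route: you use Cauchy--Schwarz, writing $1=(\sum_t w_t\sigma_t\cdot\sigma_t^{-1})^2\leq(\sum_t w_t^2\sigma_t^2)(\sum_t\sigma_t^{-2})$ with equality iff $w_t\propto\sigma_t^{-2}$, whereas the paper sets up the constrained problem $\min\sum_t w_t^2\sigma_t^2$ subject to $\sum_t w_t=1$, $w\succeq 0$ and works through the Lagrangian and KKT conditions. The two are equivalent, and your version is shorter and handles nonnegativity for free (the optimal weights are automatically positive), while the paper's KKT derivation is more mechanical but makes the role of each constraint explicit. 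The one place where the paper is more careful than you are is the identification of $\sigma_t^2$: for Bernoulli observations the exact variance is $\sigma_t^2=\mu_{i,t}(1-\mu_{i,t})/n_{i,t}+\mu_{j,t}(1-\mu_{j,t})/n_{j,t}$, which depends on the unknown means, so the paper first derives the general solution $w_t=\sigma_t^{-2}/\sum_\tau\sigma_\tau^{-2}$ and then, because the means are unknown in practice, minimizes the \emph{upper bound} on the variance obtained from $x(1-x)\leq 1/4$, i.e.\ $\sigma_t^2\leq(1/n_{i,t}+1/n_{j,t})/4$; this is what yields the stated mean-free weights, and the final variance expression is correspondingly an upper bound rather than the exact minimized variance. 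You compress this into a ``variance proxy convention'' stated up front, which gives the same answer but slightly obscures that the stated $w_t$ are optimal for the surrogate variance, not the exact Bernoulli variance. Making that bounding step explicit, as the paper does, would close the only real looseness in your write-up.
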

\begin{proof}
We formulate the following optimization problem to select a vector of weights $w\in\mb{R}^T$ constrained to the simplex that minimizes the variance of the estimator as follows:
\begin{align*}
\min_{w\in \mb{R}^T}\quad &\V[\sum\nolimits_{t=1}^T(\widehat{\mu}_{i, t}-\widehat{\mu}_{j, t})w_t] \\
\text{such that}\quad & w \succeq 0 \text{ and }\sum\nolimits_{t=1}^Tw_t=1.
\end{align*}
This problem can be rewritten as follows after denoting the variance of $\widehat{\mu}_{i, t}-\widehat{\mu}_{j, t}$ as $\sigma_t^2$ for each $t\in [T]$:
\begin{align*}
\min_{w\in \mb{R}^T}\quad & \sum\nolimits_{t=1}^Tw_t^2\sigma_t^2 \\
\text{such that}\quad & w \succeq 0 \text{ and }\sum\nolimits_{t=1}^Tw_t=1.
\end{align*}
To solve this optimization problem, we write out the Lagrangian and the associated KKT conditions. In particular, we have the following Lagrangian:
\begin{align*}
\mc{L}(w, \lambda, \gamma) = \sum\nolimits_{t=1}^Tw_t^2\sigma_t^2+\lambda(\sum\nolimits_{t=1}^T w_t -1)+\sum\nolimits_{t=1}^T \gamma_t(-w_t).
\end{align*}
The resulting KKT conditions are given by:
\begin{align*}
\forall \ t=1,2,\dots, T:\ 2w_t\sigma_t^2 + \lambda -\gamma_t&=0 \quad (\text{stationarity})\\ 
\forall \ t=1,2,\dots, T:\ \gamma_t(-w_t) &=0 \quad (\text{complementary slackness})\\ \sum\nolimits_{t=1}^T w_t=1, 
w&\succeq 0 \quad (\text{primal feasibility}) \\ 
\gamma &\succeq 0  \quad (\text{dual feasibility}).
\end{align*}
Observe that this simplifies to the following conditions by taking $\gamma=0$ and absorbing constants into the multiplier $\lambda$:
\begin{align*}
\forall \ t=1,2,\dots, T:\ w_t\sigma_t^2 &= \lambda  \quad (\text{stationarity})\\
\sum\nolimits_{t=1}^Tw_t&=1, w\geq 0 \quad (\text{primal feasibility}).
\end{align*}
Now, since we require $w_t\sigma_t^2=w_T\sigma_T^2=\lambda$ for each $t\in [T-1]$, the conditions can equivalently be written as:
\begin{align*}
\forall \ t=1,2,\dots, T-1:\ w_t &= w_T\sigma_T^2/\sigma_t^2  \quad (\text{stationarity})\\
\sum\nolimits_{t=1}^Tw_t &=1, w \succeq 0 \quad (\text{primal feasibility}). 
\end{align*}
To find the solution, we reformulate the conditions into the simplified set given below:
\begin{equation*}
\sum\nolimits_{t=1}^T\underbrace{w_T\sigma_T^2/\sigma_t^2}_{w_t}=1 \quad \text{and} \quad w \succeq 0.
\end{equation*}
It now becomes clear by solving for $w_T$ in the equation above and applying the constraint $w_t = w_T\sigma_T^2/\sigma_t^2$ for all $t\in [T-1]$  the solution is 
\begin{equation*}
w_T=\frac{1}{\sigma_T^2\sum\nolimits_{t=1}^T\sigma_t^{-2}} \quad \text{and} \quad w_t =w_T\sigma_T^2/\sigma_t^2 \ \forall \ t\in [T-1].
\end{equation*}
The solution that satisfies these conditions is given by 
\begin{equation*}
w_t = \frac{\sigma_T^2}{\sigma_t^2}\frac{1}{\sigma_T^2\sum\nolimits_{\tau=1}^{T}\sigma_{\tau}^{-2}}= \frac{1}{\sigma_t^2\sum\nolimits_{\tau=1}^{T}\sigma_{\tau}^{-2}}= \frac{\sigma_t^{-2}}{\sum\nolimits_{\tau=1}^{T}\sigma_{\tau}^{-2}}.
\end{equation*}
It then follows from plugging this quantity into the variance definition for the estimator that for this choice of $w$ we have 
\begin{equation*}
\V[\sum\nolimits_{t=1}^T(\widehat{\mu}_{i, t}-\widehat{\mu}_{j, t})w_t] =\sum\nolimits_{t=1}^Tw_t^2\sigma_t^2  
=\sum\nolimits_{t=1}^T\frac{\sigma_t^{-2}}{\Big(\sum\nolimits_{\tau=1}^{T}\sigma_{\tau}^{-2}\Big)^2} 
=\frac{1}{\sum\nolimits_{\tau=1}^{T}\sigma_{\tau}^{-2}}.
\end{equation*}
Thus, with $\sigma_t^2 = (\mu_{i, t}(1-\mu_{i, t})/n_{i, t}+\mu_{j, t}(1-\mu_{j, t})/n_{j, t})$, we have 
\begin{equation*}
w_t = \frac{(\mu_{i, t}(1-\mu_{i, t})/n_{i, t}+\mu_{j, t}(1-\mu_{j, t})/n_{j, t})^{-1}}{\sum\nolimits_{\tau=1}^{T}(\mu_{i, \tau}(1-\mu_{i, \tau})/n_{i, \tau}+\mu_{j, \tau}(1-\mu_{j, \tau})/n_{j, \tau})^{-1}},
\end{equation*}
and
\begin{equation*}
\V[\sum\nolimits_{t=1}^T(\widehat{\mu}_{i, t}-\widehat{\mu}_{j, t})w_t] =\frac{1}{\sum\nolimits_{\tau=1}^{T}(\mu_{i, \tau}(1-\mu_{i, \tau})/n_{i, \tau}+\mu_{j, \tau}(1-\mu_{j, \tau})/n_{j, \tau})^{-1}}.
\end{equation*}
In practice, the daily means would be unknown. Thus, we consider minimizing an upper bound on the variance of the estimator by defining $\sigma_t^2 = (1/n_{i, t}+1/n_{j, t})/4$ using that $x(1-x)\leq 1/4$ for $x\in (0, 1)$ and the analysis follows identically with 
\begin{equation*}
w_t = \frac{(1/(4n_{i, t})+1/(4n_{j, t}))^{-1}}{\sum\nolimits_{\tau=1}^{T}(1/(4n_{i, \tau})+1/(4n_{j, \tau}))^{-1}} = \frac{(1/n_{i, t}+1/n_{j, t})^{-1}}{\sum\nolimits_{\tau=1}^{T}(1/n_{i, \tau}+1/n_{j, \tau})^{-1}}.
\end{equation*}
This gives a variance upper bound for the estimator of 
\begin{equation*}
\V[\sum\nolimits_{t=1}^T(\widehat{\mu}_{i, t}-\widehat{\mu}_{j, t})w_t] 
\leq\frac{1}{\sum\nolimits_{\tau=1}^{T}(1/n_{i, \tau}+1/n_{j, \tau})^{-1}}.
\end{equation*}
\end{proof}

\section{Variance Analysis and Always-Valid Confidence Interval Derivation}
\label{sec:var_analysis}
We now provide an analysis comparing the variance of the running empirical mean estimator and the cumulative gain estimator and go into more details describing the derivation of the always-valid confidence intervals adopted on cumulative gain differences.
\subsection{Variance Analysis of Running Empirical Mean and Cumulative Gain Estimators}
\label{app_sec:var_analysis}
The purpose of this analysis is to compare how the variance of the running empirical mean and cumulative gain estimators behave in stationary environments where they are both unbiased, as well as to gain insights into how the variance of the cumulative gain estimator can be controlled through the allocation of samples among arms. 
In this analysis, we focus on a stochastic environment in which for an arm $i\in [k]$ the underlying daily mean $\mu_{i,t} = \mu_i$ is fixed for all $t\in [T]$. Moreover, we work under the assumptions described when formulating and analyzing the cumulative gain estimator in Section~\ref{sec:est_time_var}. For the sake of simplifying the discussion and avoiding issues around considering conditional expectations, we also assume that the daily probability allocation $p_{i, t}$ for the arm is fixed ahead of the experiment for all $t\in [T]$.

In this analysis, we consider another version of the standard running empirical mean estimator since the definition from Equation~\eqref{eq:empiricalRate} is not necessarily unbiased under the assumptions as a result of randomness in the observed sample count, and so that the underlying metric being estimated has equivalent units to the cumulative gain for the sake of comparison. In particular, let us define the estimator 
\begin{equation*}\widebar{\mu}'_{i, T} := \sum\nolimits_{\tau=1}^T\frac{\sum\nolimits_{t=1}^T r_{i,t}}{\sum\nolimits_{t=1}^T p_{i,t}}.\end{equation*}
Following an argument analogous to the proof of Proposition~\ref{prop:unbiased}, it can be shown that $\widebar{\mu}'_{i, T}$ is an unbiased estimate of the mean $\mu_i$ when normalized by the total experiment sample count $\sum\nolimits_{t=1}^Tn_t$.

We now calculate the variance $\V(\widehat{G}_{i, T})$ of the cumulative gain estimator and the variance $\V(\widebar{\mu}'_{i, T})$ of the running empirical mean estimator. To begin, observe that on any day $t\in [T]$ and observation $s_t\in  [n_t]$, we have the following bound:
\begin{align*}
    \V(\mathbbm{1}\{I_{s_t} = i\} r_{s_t})
    &= \E[\V(\mathbbm{1}\{I_{s_t} = i\} r_{s_t}|I_{s_t})] + \V(\E[\mathbbm{1}\{I_{s_t} = i\} r_{s_t}|I_{s_t}])\\
    &= \E[\mathbbm{1}\{I_{s_t} = i\}\mu_{I_{s_t}}(1-\mu_{I_{s_t}})] + \V( \mathbbm{1}\{I_{s_t} = i\} \mu_{I_{s_t}})\\
    &= p_{i,t}\mu_i(1-\mu_i) + p_{i,t}(1-p_{i,t})\mu_{i}^2\\
    &= p_{i,t}\mu_i-p_{i,t}^2\mu_i^2\\
    &\leq p_{i,t}\mu_i.
\end{align*}
Given this calculation, we obtain the variance of the running empirical mean estimator as follows:
\begin{align*}
    \V(\widebar{\mu}'_{i, T}) 
    &= \frac{1}{(\sum\nolimits_{t=1}^T p_{i,t})^2}\V\Big(\sum\nolimits_{\tau=1}^T\sum\nolimits\nolimits_{t=1}^T \sum\nolimits_{s_t=1}^{n_t} \mathbbm{1}\{I_{s_t} = i\} r_{s_t}\Big)\\
    &\leq  \frac{T^2}{(\sum\nolimits_{t=1}^T p_{i,t})^2}\sum\nolimits_{t=1}^T n_t p_{i,t} \mu_i.
\end{align*}
Similarly, the variance of the cumulative gain estimator is obtained as follows:
\begin{align*}
    \V(\widehat{G}_{i, T}) 
    &= \V\Big(\sum\nolimits_{t=1}^T \frac{1}{p_{i,t}}\sum\nolimits_{s=1}^{n_t} \mathbbm{1}\{I_{s_t} = i\} r_{s_t}\Big)\\
    &\leq \sum\nolimits_{t=1}^T \frac{n_t\mu_i}{p_{i,t}}.
\end{align*}
To facilitate the discussion to follow, let us now assume that the daily experiment traffic $n_t=n$ is fixed for each day $t\in [T]$ of the experiment. Then, summarizing, we have shown
\begin{equation}
\V(\widebar{\mu}'_{i, T}) \leq n\mu_i\frac{T^2}{\sum\nolimits_{t=1}^T p_{i,t}} \quad \text{and} \quad 
\V(\widehat{G}_{i, T}) \leq n\mu_i\sum\nolimits_{t=1}^T\frac{1}{p_{i,t}}.
\label{eq:var_comparison}
\end{equation}

\paragraph{Implications.} This analysis shows that in general, if the underlying environment is indeed stationary, the variance of the cumulative gain estimator $\widehat{G}_{i,T}$ can be higher than that of the running empirical mean estimator $\widebar{\mu}'_{i,T}$. This follows from the arithmetic-mean-harmonic-mean inequality, which in this context implies 
\[\frac{T^2}{\sum\nolimits_{t=1}^Tp_{i, t}}\leq \sum\nolimits_{t=1}^T\frac{1}{p_{i, t}}\quad \text{so that}\quad \V(\widebar{\mu}'_{i, T})\leq \V(\widehat{G}_{i, T}).\]
Observe that given a static probability allocation across time as in standard A/B/N testing, we have $\V(\widebar{\mu}'_{i, T})= \V(\widehat{G}_{i, T})$. In contrast, the variance of the cumulative gain estimator can be much greater than the variance of the running empirical mean estimator if the probability allocation is highly dynamic or tends toward zero on any given day. We remark that the variance analysis for the cumulative gain estimator follows similarly without the stationarity assumption used in this context. Consequently, Algorithm~\ref{alg:ae} presented in Section~\ref{sec:elimination} for adaptive counterfactual inference is motivated by controlling the variance of the estimator through the allocation.

\subsection{Always-Valid Confidence Interval Justification}
\label{app_sec:confidence_analysis}
We construct an always-valid confidence interval based on normal approximations and an application of the mixture sequential probability ratio test (MSPRT)~\cite{johari2017peeking, robbins1970statistical, howard2021time}.
Here, we walk through the derivation of the always-valid confidence interval we adopt. This begins by examining the variance process of the cumulative gain gap estimator $\widehat{G}_{i, j, t}:=\widehat{G}_{i, t}-\widehat{G}_{j, t}$ at any arbitrary day $t\in [T]$.
In general, conditional on $\mc{F}_{t-1}$ (see Eq.~\ref{eq:filt}), 
\begin{equation*}
r_{i,t}\sim \text{Binomial}(n_{i,t}, \mu_{i,t})\quad  \text{and}\quad r_{j,t}\sim \text{Binomial}(n_{j,t}, \mu_{j,t}).
\end{equation*}
Then, assuming the total daily sample count $n_t$ is sufficiently large, observing that $\E[n_{i,t}] = n_t p_{i,t}$ and $\E[n_{j,t}] = n_t p_{j,t}$,  and invoking the central limit theorem, we have
\begin{equation*}
r_{i,t}\sim \mathcal{N}(n_tp_{i,t}\mu_{i,t}, n_tp_{i,t}\mu_{i,t}(1-\mu_{i,t})) \quad \text{and}\quad  r_{j,t}\sim \mathcal{N}(n_tp_{j,t}\mu_{j,t}, n_tp_{j,t}\mu_{j,t}(1-\mu_{j,t})).
\end{equation*} 
Thus, we see that conditionally, 
\begin{equation*}
\frac{r_{i,t}}{p_{i,t}} - \frac{r_{j,t}}{p_{j,t}}\sim \mc{N}\Big(n_t(\mu_{i,t} - \mu_{j,t}), 
 n_t \Big(\frac{\mu_{i,t}(1-\mu_{i,t})}{p_{i,t}}+\frac{\mu_{j,t}(1-\mu_{j,t})}{p_{j,t}}\Big)\Big)
\end{equation*} 
Now, using this approximation, we define $S_t = \widehat{G}_{i,j,t} - G_{i,j,t}$ and the corresponding variance process 
\begin{equation*}
V_{t}=\sum\nolimits_{\tau=1}^t n_\tau\Big(\frac{\mu_{i,\tau}(1-\mu_{i,\tau})}{p_{i,\tau}}+\frac{\mu_{j,\tau}(1-\mu_{j,\tau})}{p_{j,\tau}}\Big).
\end{equation*}  
Finally, under the distributional assumptions and applications of the central limit theorem, we see that $\{S_t\}_{t\geq 1}$ and the process $\{V_t\}_{t\geq 1}$ are adapted to $\mc{F}_t$ and satisfy the property that the process defined by $\{\exp(S_t - \lambda^2/(2V_t))\}_{t\geq 1}$ forms a supermartingale for any $\lambda\geq 0$.

To obtain the always-valid confidence interval, we now apply the MSPRT (see Eq.~14 in \cite{howard2021time}) using the plugin estimators $\widehat{\mu}_{i,t}$ and $\widehat{\mu}_{j,t}$ for the unknown daily arm means $\mu_{i,t}$ and $\mu_{j,t}$ on each day in an estimate of the variance. As a result, under the stated assumptions, we have
\begin{equation*}
\P(\exists \ t\geq 1, i,j\in [k]: |\widehat{G}_{i,j, t}- G_{i, j,t}| \geq C(i,j,t,\delta)) \leq \delta, 
\end{equation*}
with
 \begin{equation*} 
C(i,j,t,\delta) := \sqrt{(\widehat{V}_{i, j, t}+\rho)\log\big((\widehat{V}_{i, j, t}+\rho)/(\rho\delta^2)\big)}
 \end{equation*}
 where $\rho >0$ is a fixed constant and 
\begin{equation*} 
\widehat{V}_{i, j, t}=\sum\nolimits_{\tau=1}^t n_\tau\Big(\frac{\widehat{\mu}_{i,\tau}(1-\widehat{\mu}_{i,\tau})}{p_{i,\tau}}+\frac{\widehat{\mu}_{j,\tau}(1-\widehat{\mu}_{j,\tau})}{p_{j,\tau}}\Big).
\end{equation*} 

\section{Proofs of Results in Section~\ref{sec:algorithm_guarantees}}
\label{sec:guarantee_proof}
\paragraph{Proof of Proposition~\ref{lem:1b}}
Let us begin by defining the `good event', in which the always-valid confidence intervals on the cumulative gain gap relative to the optimal arm hold simultaneously. That is,
\begin{equation*}
\mathcal{E}=\{\forall \ t\geq 1, \forall \ j\in [k]:\big|\widehat{G}_{i^{\ast}, j, t} -G_{i^{\ast}, j, t}]\big| \leq C(i^{\ast}, j, t, \delta/k)\}.
\end{equation*}
Note that this event holds with probability at least $1-\delta$ by the definition of the always-valid confidence interval and a union bound. 

We claim that the optimal arm $i^{\ast}\in [k]$ is not eliminated by CGSE on the event $\mathcal{E}$ holding. This will allow us to conclude that the optimal arm $i^{\ast}\in [k]$ is not eliminated by CGSE with probability at least $1-\delta$. Define $\widebar{n}_t=\sum_{\tau=1}^t n_{\tau}$ for any day $t\geq 1$. Then, for any arm $j\in [k]\setminus \{i^{\ast}\}$ on any day $t\geq 1$, on the event $\mathcal{E}$, we have the following which is explained below:
\begin{align}
\widebar{n}_t^{-1}\Big(\widehat{G}_{j, i^{\ast}, t} -C(j, i^{\ast}, t, \delta/k)\Big)&= \widebar{n}_t^{-1}\Big(\mathbb{E}[\widehat{G}_{j, i^{\ast}, t}]+\widehat{G}_{j, i, t}-\mathbb{E}[\widehat{G}_{j, i^{\ast}, t}] -C(j, i^{\ast}, t, \delta/k)\Big) \label{eq:p2_s1} \\
&= -\Delta_{j, t}+\widebar{n}_t^{-1}\Big(\widehat{G}_{j, i^{\ast}, t}-\mathbb{E}[\widehat{G}_{j, i^{\ast}, t}] -C(j, i^{\ast}, t, \delta/k)\Big) \label{eq:p2_s2}  \\
&\leq -\Delta_{j, t}+\widebar{n}_t^{-1}\Big(C(j, i^{\ast}, t, \delta/k) -C(j, i^{\ast}, t, \delta/k)\Big) \label{eq:p2_s3} \\
&= -\Delta_{j, t}  \label{eq:p2_s4} \\
&\leq 0.\label{eq:p2_s8}
\end{align}
Observe that~\eqref{eq:p2_s1} is obtained by adding and subtracting $\mathbb{E}[\widehat{G}_{j, i^{\ast}, t}]$. Equation~\eqref{eq:p2_s2} uses the unbiased estimator property of the cumulative gain estimator from Proposition~\ref{prop:unbiased} so that $\mathbb{E}[\widehat{G}_{j, i^{\ast}, t}]=G_{j, i^{\ast}, t}$ and then applies the definition $\widebar{n}_t^{-1} G_{j, i^{\ast}, t}=-\Delta_{j, t}$. Note that Equation~\eqref{eq:p2_s3} holds by the confidence bounds on the event $\mathcal{E}$. Moreover, Equation~\eqref{eq:p2_s4} is a direct simplification since $C(j, i^{\ast}, t, \delta/k)=C(i^{\ast}, j, t, \delta/k)$ by definition. Finally, Equation~\eqref{eq:p2_s8} holds by  Assumption~\ref{assump:converging}.
Thus, by the elimination criterion of CGSE, the optimal arm $i^{\ast}\in [k]$ is not eliminated on the event $\mathcal{E}$. Hence, with probability at least $1-\delta$, the optimal arm is not eliminated.\qed

\paragraph{Proof of Proposition~\ref{lem:2b}}

Let us begin by defining the `good event', in which the always-valid confidence intervals on the cumulative gain gap relative to the optimal arm hold simultaneously. That is,
\begin{equation*}
\mathcal{E}=\{\forall \ t\geq 1, \forall \ j\in [k]:\big|\widehat{G}_{i^{\ast}, j, t} -G_{i^{\ast}, j, t}]\big| \leq C(i^{\ast}, j, t, \delta/k)\}.
\end{equation*}

We claim that all suboptimal arms $j\in [k]\setminus \{i^{\ast}\}$ are eventually eliminated by CGSE on the event $\mathcal{E}$ holding. This will allow us to conclude that all suboptimal arms $j\in [k]\setminus \{i^{\ast}\}$ are eventually eliminated by CGSE with probability at least $1-\delta$. 
Define $\widebar{n}_t=\sum_{\tau=1}^t n_{\tau}$ for any day $t\geq 1$.
Now, consider an arbitrary day $t\geq 1$ and consider some arm $j\in \mathcal{A}_t\setminus \{i^{\ast}\}$ that has not been eliminated yet and let $i=\argmax_{i'\in \mathcal{A}_t\setminus \{j\}}\widehat{G}_{i, t}$ denote the active arm with the maximum cumulative gain estimate excluding arm $j$. Recall that by Proposition~\ref{lem:1b} and its proof, the optimal arm $i^{\ast}\in [k]$ always remains in the active set on the event $\mc{E}$. Then, on the event $\mathcal{E}$, we have the following that is explained below:
\begin{align}
\widebar{n}_t^{-1}\Big(\widehat{G}_{i, j, t} -C(i, j, t, \delta/k)\Big)
&\geq \widebar{n}_t^{-1}\Big(\widehat{G}_{i^{\ast}, j, t} -C(i, j, t, \delta/k)\Big) \label{eq:p2b_s1} \\
&= \widebar{n}_t^{-1}\Big(\mathbb{E}[\widehat{G}_{i^{\ast}, j, t}]+\widehat{G}_{i^{\ast}, j, t}-\mathbb{E}[\widehat{G}_{i^{\ast},j, t}] -C(i, j, t, \delta/k)\Big)\label{eq:p2b_s2} \\
&\geq \widebar{n}_t^{-1}\Big( \mathbb{E}[\widehat{G}_{i^{\ast}, j, t}] -C(i^{\ast}, j, t, \delta/k)+C(i, j, t, \delta/k)\Big)\label{eq:p2b_s3} \\
&= \Delta_{j, t} -\widebar{n}_t^{-1}\Big(C(i^{\ast}, j, t, \delta/k)+C(i, j, t, \delta/k)\Big)\label{eq:p2b_s4}. 
\end{align}
Observe that Equation~\eqref{eq:p2b_s1} holds because $\widehat{G}_{i, j, t}\geq \widehat{G}_{i^{\ast}, j, t}$ since $i=\argmax_{i'\in \mathcal{A}_t\setminus \{j\}}\widehat{G}_{i, j, t}$. Then, Equation~\eqref{eq:p2b_s2} is obtained by adding and subtracting $\mathbb{E}[\widehat{G}_{i^{\ast}, j, t}]$ and Equation~\eqref{eq:p2b_s3} holds by the confidence bounds on the event $\mathcal{E}$. Moreover, Equation~\eqref{eq:p2b_s4} uses the unbiased estimator property of the cumulative gain estimator from Proposition~\ref{prop:unbiased} so that $\mathbb{E}[\widehat{G}_{i^{\ast},j, t}]=G_{i^{\ast},j, t}$ and then applies the definition $\widebar{n}_t^{-1} G_{i^{\ast},j, t}=\Delta_{j, t}$.

Now, by Assumption~\ref{assump:mingap}, there exists a day $t_0$ and constant $\epsilon>0$ such that for day $t>t_0$, $\Delta_{j, t}\geq \epsilon$. Thus, continuing on, and considering $t>t_0$, we get the following:
\begin{align*}
\widebar{n}_t^{-1}\Big(\widehat{G}_{i, j, t} -C(i, j, t, \delta/k)\Big)
&\geq \Delta_{j, t} -\widebar{n}_t^{-1}\Big(C(i^{\ast}, j, t, \delta/k)+C(i, j, t, \delta/k)\Big)\notag \\
&\geq  \epsilon -\widebar{n}_t^{-1}\Big(C(i^{\ast}, j, t, \delta/k)+C(i, j, t, \delta/k)\Big).
\end{align*}
We can conclude that the arm $j$ is eventually eliminated by CGSE as a result of the elimination criterion given the event $\mathcal{E}$ since $\widebar{n}_t^{-1}\big(C(i^{\ast}, j, t, \delta/k)+C(i, j, t, \delta/k)\big)\rightarrow 0$ as $t\rightarrow \infty$. This follows from the observation that the confidence intervals grow asymptotically as $\mc{O}(\sqrt{\widebar{n}_t\log(\widebar{n}_t)})$ owing to the sampling allocation, so the confidence intervals normalized by $\widebar{n}_t^{-1}$ tend to zero as $t\rightarrow \infty$. Hence, with probability at least $1-\delta$, CGSE eliminates each arm $j\in [k]\setminus \{i^{\ast}\}$ under the stated assumptions.

\qed

\end{document}